\newcommand{\mat}[1]{\ensuremath \boldsymbol{\mathbf{#1}}} % gives upright arabic characters
\newcommand{\order}[1]{\ensuremath{\mathcal{O}(#1)}}
\newcommand{\inR}[1]{\ensuremath \in \mathbb{R}^{#1}}
\newcommand{\nystrom}{Nystr\"{o}m}
\newcommand{\K}{\mat{K}}
\newcommand{\mPhi}{\ensuremath \mat{\Phi}}
\newcommand{\mphi}{\ensuremath \mat{\phi}}
\newcommand{\mmu}{\ensuremath \mat{\mu}}
\newcommand{\mSigma}{\ensuremath \mat{\Sigma}}
\newcommand{\meanLoss}{\ensuremath \mathcal{L}_\mu}
\newcommand{\itilde}{\ensuremath \tilde{\mat{i}}}
\newcommand{\jtilde}{\ensuremath \tilde{\mat{j}}}
\newcommand{\ltilde}{\ensuremath \tilde{\mat{\ell}}}
\newcommand{\rtilde}{\ensuremath \tilde{\mat{r}}}
\newcommand{\covLoss}{\ensuremath \mathcal{L}_\Sigma}
\newcommand{\constLoss}{\ensuremath \mathcal{L}_{\textsf{const}}}
\newcommand{\iid}{\textit{iid}}
\newtheorem{proposition}{Proposition}
\newtheorem{theorem}{Theorem}
\newtheorem{remark}{Remark}
\newcommand{\lesslines}{\looseness=-1} % Try one fewer line for the following paragraph
\title{Quadruply Stochastic Gaussian Processes}
\author{  
	Trefor W. Evans \\
	University of Toronto\\
	\texttt{trefor.evans@mail.utoronto.ca}
	\And
	Prasanth B. Nair \\
	University of Toronto \\
	\texttt{pbn@utias.utoronto.ca}
}
\begin{document}
\maketitle
\begin{abstract}
\lesslines
We introduce a stochastic variational inference procedure for training scalable Gaussian process~(GP) models whose per-iteration complexity is independent of both the number of training points,~$n$, and the number basis functions used in the kernel approximation,~$m$.
Our central contributions include an unbiased stochastic estimator of the evidence lower bound~(ELBO) for a Gaussian likelihood, as well as
a stochastic estimator that lower bounds the ELBO for several other likelihoods such as Laplace and logistic.
Independence of the stochastic optimization update complexity on $n$ and $m$ enables inference on huge datasets using large capacity GP models.
We demonstrate accurate inference on large classification and regression datasets using GPs and relevance vector machines with up to $m=10^7$ basis functions.
\end{abstract}
\section{Introduction}
Gaussian process (GP) modelling is a powerful Bayesian approach for classification and regression, however,
it is restricted to modestly sized datasets since training and inference require \order{n^3} time and \order{n^2} storage, where $n$ is the number of training points~\citep{rasmussen_gpml}.
This has motivated the development of sparse GP methods that use a finite number $m$ ($\ll n)$ of basis functions to reduce time and memory requirements to \order{m^2n+m^3} and \order{mn+m^2}, respectively~(e.g.~\citet{smola_sor,snelson_fitc,titsias_vfe,quin_ssgp}).
However, such techniques perform poorly if too few inducing points are used, and
computational savings are lost on complex datasets that require $m$ to be large.

\citet{wilson_kiss} approached these model capacity concerns by exploiting the structure of inducing points placed on a grid, allowing for $m > n$ while reducing computational demands over an exact GP.
This inducing point structure enables performance gains in low-dimensions,
however, time and storage complexities scale \emph{exponentially} with the dataset dimensionality, rendering the technique intractable for general learning problems unless a dimensionality reduction procedure is applied.
Comparatively, the ``variational free energy''~(VFE) Gaussian process approximation~\citep{titsias_vfe} is a more general technique that makes efficient use of inducing points by optimizing them through a variational objective in an attempt to more accurately capture the true posterior.
Many extensions have been made to this approach including a stochastic training procedure that enables dataset sub-sampling~\citep{hensman_gp_big_data}~(SVGP).
This allows the technique to be extended to large datasets, however, it is inherently restricted in the capacity of the model since it incurs a cost of \order{m^3} per iteration.

In this work, we address these concerns by introducing a novel stochastic variational inference technique whose per-iteration complexity is \order{1}~(i.e. independent of both $n$ and $m$), allowing very powerful models to be constructed on large datasets.
Note that by \emph{per-iteration} complexity, we refer to computational and storage demands at each iteration of stochastic gradient descent~(SGD) and this should not be confused with 
the expected complexity to converge to a given accuracy,
which we do not discuss.
Low per-iteration complexity is extremely valuable from a practical perspective since it does not require limiting model capacity or GP approximation accuracy to available resources~(e.g. GPU memory constraints).
We compare the per-iteration complexity of the proposed approach, QSGP, to prior work in \cref{tbl:complexity_comparision}.
Below is a summary of our main contributions
\begin{itemize}
\item For regression, an unbiased SGD scheme is developed for estimating the variational parameters whose per-iteration complexity is \emph{independent} of $n$ and $m$;
\item For classification, we develop a SGD scheme that maximizes a lower bound of the ELBO with a per-iteration complexity that is also \emph{independent} of $n$ and $m$;
\item A novel control variate is introduced to reduce the variance of the stochastic gradient approximations without affecting computational complexity; and
%\item A closed form solution is derived for variational covariance parameters under the assumption of mean-field variational inference(?); and
%\item A fast augmentation approach is developed to improve the predictive posterior variance without impacting the prediction complexity(?).
\item We demonstrate scalability by training powerful models on large classification and regression datasets, using up to $m=10^7$ basis functions.
%\item We demonstrate accurate inference on a dataset with over two-million training examples.
\end{itemize}
\begin{comment}
We also make the observation that the developed stochastic approaches have applications to a broad class of problems outside of machine learning.
The central result in \cref{thm:stochastic_estimator} can be applied to other applications involving large dense systems of linear equations.
Stochastic methods for solving large systems of linear equations is an active and promising area of research~(see, for instance, \citep{richtarik_stochastic_linear_systems}), however,
to the authors' knowledge, there is no existing linear solver for dense, unstructured matrices that removes dependence on $n$~(i.e. the dimensionality of the linear system of equations).
Exploration of the proposed techniques upon these problems will be reserved for future work, while this paper remains focused on GPs. 
\end{comment}
The first two points comprise of novel Monte Carlo estimators for the evidence lower bound~(ELBO) with a four-level stochastic sampling approach:
we sub-sample the $n$ training examples once, and sub-sample the $m$ basis functions three times.
Because of these four levels of stochasticity, we coin the approach ``quadruply stochastic Gaussian processes''~(QSGP).
\Cref{sec:unbiased_elbo} describes QSGP for regression problems, whereas \cref{sec:elbo_lower_bound} outlines QSGP for other likelihoods, including logistic likelihoods for classification.
We conclude with numerical studies in \cref{sec:studies}, but to begin we provide a brief background on Gaussian processes and outline the matrix notations used in this paper.

\begin{table}
\begin{minipage}[c]{0.55\textwidth}
{\small
\begin{tabular}{@{}lll@{}}
\toprule
& Per-iteration Computation & Per-iteration Storage\\
\midrule
Exact GP & \order{n^3} & \order{n^2}\\
VFE~\citep{titsias_vfe} & \order{nm^2 {+} m^3} & \order{nm {+} m^2}\\
SVGP~\citep{hensman_gp_big_data} & \order{m^3} & \order{m^2}\\
QSGP & \order{1} & \order{1}\\
\bottomrule
\end{tabular}
}
\end{minipage}\hfill
\begin{minipage}[c]{0.4\textwidth}
\caption{
{Per-iteration} complexities for hyper- or variational-parameter optimization.
Storage refers to the working memory requirements per SGD iteration~(e.g. GPU memory requirements).
}
\label{tbl:complexity_comparision}
\end{minipage}
\end{table}

\vspace{-2mm}
\paragraph{Notation}
We use the notations
$\mat{h}_{i,:}$, $\mat{h}_i$ and $h_{ij}$ to denote the $i$th row, $i$th column and $ij$th element of the matrix $\mat{H}$, respectively.
Given the sets of indices $\mat{u}$ and $\mat{v}$,
$\mat{H}_{\mat{u},\mat{v}}$ denotes a matrix whose $ij$th element is given by
${h}_{u_i v_j}$.

\vspace{-2mm}
\paragraph*{Background on Gaussian Processes}
Gaussian processes (GPs) provide non-parametric prior distributions over the latent function that generated a training dataset. 
It is typically assumed that the dataset is corrupted by independent Gaussian noise with variance $\sigma^2 \geq 0$
and that the latent function is drawn from a Gaussian process with zero mean and covariance determined by the kernel $k : \mathbb{R}^{d} \times \mathbb{R}^{d} \rightarrow \mathbb{R}$.
We consider a regression problem where
$\mat{X} = \{\mat{x}_i \inR{d}\}_{i=1}^n$ and $\mat{y} \inR{n}$ denote the set of $n$ training point input locations and responses, respectively. 
Inference can be carried out at the test point, $\mat{x}_* \inR{d}$, giving the following posterior distribution of the prediction $y_* \inR{}$
\begin{align*} %\label{eqn:gp_posterior}
\Pr(y_*| \mat{X}, \mat{x}_*) &= \mathcal{N}\left(
y_*\ \big|\ 
\mat{k}(\mat{x}_*)^T (\K + \sigma^2 \mat{I})^{-1} \mat{y},\quad
{k}(\mat{x}_*,\mat{x}_*) - \mat{k}(\mat{x}_*)^T(\K + \sigma^2 \mat{I})^{-1}\mat{k}(\mat{x}_*)
\right), 
\end{align*}
where 
$\K \inR{n \times n}$ is the training dataset kernel covariance matrix whose $ij$th element is $k(\mat{x}_i,\mat{x}_j)$, and
$\mat{k}(\mat{x}_*) \inR{n}$ is the cross-covariance matrix between test point $\mat{x}_*$ and the training dataset such that the $i$th element is $k(\mat{x}_i,\mat{x}_*)$.

\section{Unbiased ELBO Estimator in \order{1}}
\label{sec:unbiased_elbo}
\label{sec:unbiased_estimator}
\lesslines
In this section we introduce a sparse Gaussian process model using the finite basis function approximation of a kernel,
$k(\mat{x}, \mat{z}) \approx \mat{\phi}(\mat{x})^T \mat{S}^{-1} \mat{\phi}(\mat{z})$.
This kernel directly specifies a \emph{function space} prior, however, 
we can also consider a \emph{weight space} perspective to describe the following \emph{equivalent} model:
consider a generalized linear model of the form $f(\mat{x}) = \sum_{i=1}^m w_i \phi_i(\mat{x})$,
where the weight space prior is $\Pr(\mat{w}) = \mathcal{N}(\mat{w}|\mat{0},\mat{S}^{{-}1})$, and
the likelihood is $\Pr(\mat{y}|\mat{X}, \mat{w}) = \mathcal{N}\big(\mat{y}|\mat{\Phi}\mat{w}, \sigma^2\mat{I}\big)$~\citep{rasmussen_gpml}.
In our notation, 
$\mat{S} \inR{m \times m}$ is an SPD weight prior precision matrix,
$\mat{\Phi}\inR{n\times m}$, where $\phi_{ij} = \phi_j(\mat{x}_i)$, contains the evaluations of all $m$ basis functions at all $n$ training points, and
$\mat{w} \inR{m}$ are the weights.
Clearly some choices of $\mat{S}$ and $\mPhi$ will result in better kernel approximations than others but we defer discussion about these attributes until \cref{sec:empirical_bayes}, and will continue our presentation assuming arbitrary choices.

As a result of conjugacy of the Gaussian prior, the posterior of the discussed model is also Gaussian and can be directly computed in closed form in \order{m^2n + m^3}~(e.g. see \citep{rasmussen_gpml}). 
However, we choose to instead perform variational inference which we show allows for stochastic inference procedures to scale to larger values of $m$ and $n$.
The ability to handle larger $n$ means we can work with larger datasets, while larger $m$ means a better kernel approximation and subsequently a better GP approximation.
We note that for the remainder of the paper, we focus on the weight space perspective and tailor the following overview of variational inference to this model structure.

Variational inference is a method for approximating probability densities in Bayesian statistics~\citep{jordan_vi_root, wainwright_vi_root, hoffman_svi, ranganath_blackbox_vi, blei_blackbox_vi, blei_vi_review}.
Focusing on the generalized linear model described previously, we need to compute the posterior $\Pr(\mat{w}|\mat{y},\mat{X})$ which is nominally posed as an integration problem.
Variational inference turns this task into an optimization problem.
By introducing a family of probability distributions $q(\mat{w})$ parameterized by some variational parameters, we minimize the Kullback-Leibler divergence to the exact posterior.
This equates to maximization of the evidence lower bound~(ELBO) which we can write as follows
{
\begin{align} \label{eqn:elbo}
\textsf{ELBO} &=
\mathbb{E}_{q(\mat{w})}\big[ \log \Pr(\mat{w})
+ \log \Pr(\mat{y}|\mat{X}, \mat{w}) 
-  \log q(\mat{w}) \big]
\leq \log \Pr\big(\mat{y}|\mat{X}\big),
\end{align}
}%
where the equality holds if $\Pr(\mat{w}|\mat{y},\mat{X}) = q(\mat{w})$.
Computation of the ELBO generally requires analytically intractable computations, however, \citet{challis_gaussian_vi} show that all terms of \cref{eqn:elbo} can be written in closed form as follows for 
the previously introduced generalized linear model if we choose the Gaussian variational distribution
$q(\mat{w}) = \mathcal{N}(\mat{w}|\mat{\mu},\mat{\Sigma})$, where $\mat{\mu}\inR{m}$ and $\mat{\Sigma}\inR{m\times m}$
{
\begin{align}
\label{eqn:elbo_terms}
&\textsf{ELBO}
=
-\frac{1}{2}
\underbrace{
\frac{1}{\sigma^2}\bigg(- 2\mat{y}^T\mat{\Phi}\mat{\mu} + \textsf{sum}\Big((\mat{\Phi}\mat{\mu})^2\Big)\bigg) + \mat{\mu}^T\mat{S}\mat{\mu}
}_{\meanLoss(\mmu)} 
\\
\nonumber
&
-\frac{1}{2}
\underbrace{
\frac{1}{\sigma^2}\textsf{sum}\Big((\mat{\Phi}\mat{C})^2\Big) + \textsf{tr}\big(\mat{S} \mat{\Sigma}\big) - \log\big|\mSigma\big|
}_{\covLoss(\mSigma)} 
-\frac{1}{2}
\underbrace{
\log \big|2\pi \mat{S}^{{-}1}\big|
{-} m\log(2\pi) {-} m
{+} n\log(2\pi \sigma^2) {+} \tfrac{\mat{y}^T\mat{y}}{\sigma^2}
}_{\constLoss},
\end{align}
}%
where
$(\cdot)^2$ refers to the element-wise square of the argument, and
we parameterize the variational covariance using the lower triangular Cholesky factorization $\mat{C} \inR{m \times m}$ with positive values on the diagonal such that 
$\mat{\Sigma} = \mat{C}\mat{C}^T$.
While \cref{eqn:elbo} can be maximized with respect to $\{\mat{\mu}, \mat{C}\}$ using deterministic gradient based optimization at \order{m^3 + nm^2} computations per iteration, the main contribution of this paper is a novel stochastic training procedure that reduces this complexity to \order{1} computations per iteration.
That is, we demonstrate how
to compute an unbiased estimate of the ELBO~(and its gradient) 
in a time that is \emph{independent} of both the quantity of training data~$n$, and the number of basis functions~$m$.
Leveraging this estimator allows extremely fast iterations of stochastic gradient descent~(SGD), the workhorse of many large-scale machine learning optimization problems.
%However,  and we employ it again here in a novel problem formulation with mini-batching not just over the $n$ training examples, but also over the $m$ basis functions.

We first observe that when the prior is fixed, the ELBO in \cref{eqn:elbo_terms} is additively separable in the variational mean parameters~($\mat{\mu}$) and variational covariance parameters~($\mat{C}$).
This separability enables these parameters to be estimated by solving two decoupled sub-problems which we analyze separately.

\subsection{Learning the Variational Mean}
We begin with the following result which provides a novel estimator for $\meanLoss(\mmu)$ from \cref{eqn:elbo_terms}.
\begin{theorem}
\label{thm:mu_estimator}
An unbiased estimator whose evaluation has a complexity independent of $n$ and $m$ can be written as
\begin{align*}
%\hspace{-3.5mm}%
\meanLoss(\mmu) \approx& 
-
\tfrac{2nm}{\sigma^2\widetilde{n}\widetilde{m}}
\mat{y}_{\ltilde}^T\mPhi_{\ltilde, \itilde}\mat{\mu}_{\itilde}
%\\& \quad
+
\tfrac{nm^2}{\sigma^2 \widetilde{n}\widetilde{m}^2}
\mmu_{\jtilde}^T
\mPhi_{\ltilde, \jtilde}^T
\mPhi_{\ltilde, \itilde}
\mmu_{\itilde}
+
\tfrac{m^2}{\widetilde{m}^2}
\mmu_{\jtilde}^T \mat{S}_{\jtilde, \itilde} \mmu_{\itilde},
\end{align*}
where
$\itilde,\ \jtilde \inR{\widetilde{m}}$ both contain indices sampled uniformly from $\{1,2,\dots,m\}$,
$\ltilde \inR{\widetilde{n}}$ contains indices sampled uniformly from $\{1,2,\dots,n\}$, and
$\widetilde{m}$ and $\widetilde{n}$ are the number of Monte Carlo samples.
\end{theorem}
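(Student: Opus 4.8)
The plan is to expand each of the three summands comprising $\meanLoss(\mmu)$ into explicit nested sums over the data index $l\in\{1,\dots,n\}$ and the basis-function indices $i,j\in\{1,\dots,m\}$, and then replace every such sum by a uniform-subsampling Monte Carlo estimator. The single fact I will lean on throughout is that, for any array $(a_k)_{k=1}^N$ and indices $\tilde{k}_1,\dots,\tilde{k}_T$ drawn independently and uniformly from $\{1,\dots,N\}$, the rescaled partial sum $\frac{N}{T}\sum_{t=1}^T a_{\tilde{k}_t}$ is unbiased for $\sum_{k=1}^N a_k$, since by linearity of expectation $\mathbb{E}\big[\frac{N}{T}\sum_t a_{\tilde{k}_t}\big]=\frac{N}{T}\cdot T\cdot\frac{1}{N}\sum_k a_k$. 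Applying this index by index, and using independence of the draws across distinct index sets, will yield the claimed estimator.

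First I would dispatch the two terms that are linear in the sense of carrying a single sum per index. The data-fit cross term is $-\frac{2}{\sigma^2}\mat{y}^T\mPhi\mmu=-\frac{2}{\sigma^2}\sum_l\sum_i y_l\phi_{li}\mu_i$; subsampling the $l$-sum with $\ltilde$ (scale $n/\widetilde{n}$) and the $i$-sum with $\itilde$ (scale $m/\widetilde{m}$) produces $-\frac{2nm}{\sigma^2\widetilde{n}\widetilde{m}}\mat{y}_{\ltilde}^T\mPhi_{\ltilde,\itilde}\mmu_{\itilde}$, matching the first term. The prior term $\mmu^T\mat{S}\mmu=\sum_i\sum_j\mu_j S_{ji}\mu_i$ carries no data index; subsampling the $i$-sum with $\itilde$ and the $j$-sum with an \emph{independent} set $\jtilde$ (each scaled by $m/\widetilde{m}$) gives $\frac{m^2}{\widetilde{m}^2}\mmu_{\jtilde}^T\mat{S}_{\jtilde,\itilde}\mmu_{\itilde}$, matching the third term; unbiasedness here uses that $\itilde$ and $\jtilde$ are drawn independently, so that $\mathbb{E}[\mu_{\tilde{j}_b}S_{\tilde{j}_b,\tilde{i}_a}\mu_{\tilde{i}_a}]=\frac{1}{m^2}\mmu^T\mat{S}\mmu$ for every pair of positions $(a,b)$.

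The crux is the quadratic data-fit term $\frac{1}{\sigma^2}\tsum\big((\mPhi\mmu)^2\big)=\frac{1}{\sigma^2}\sum_l\big(\sum_i\phi_{li}\mu_i\big)^2$, where each summand is a \emph{product of two identical inner sums} over the basis functions. The obstacle is that naively reusing one subsample $\itilde$ for both factors is biased: $\mathbb{E}\big[(\sum_{i\in\itilde}\cdot)^2\big]\neq\big(\mathbb{E}[\sum_{i\in\itilde}\cdot]\big)^2$, because of the diagonal (variance) contribution. The fix, and the reason a third basis subsample appears, is to write the square as $\big(\sum_i\phi_{li}\mu_i\big)\big(\sum_j\phi_{lj}\mu_j\big)$ and estimate the two factors with the two \emph{independent} index sets $\itilde$ and $\jtilde$. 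Conditioning on $\ltilde_p=l$, independence of $\itilde$ and $\jtilde$ factorizes the expectation into $\big(\frac{1}{m}\sum_j\mu_j\phi_{lj}\big)\big(\frac{1}{m}\sum_i\phi_{li}\mu_i\big)$, and averaging over $\ltilde_p$ then recovers $\frac{1}{nm^2}\sum_l(\sum_i\phi_{li}\mu_i)^2$ per position-triple $(p,a,b)$; multiplying by the $\widetilde{n}\widetilde{m}^2$ triples and the stated prefactor $\frac{nm^2}{\sigma^2\widetilde{n}\widetilde{m}^2}$ reproduces the second term exactly.

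Finally, unbiasedness of the whole estimator follows by linearity of expectation applied term by term, with the three index sets $\ltilde,\itilde,\jtilde$ drawn mutually independently so that the per-position expectations factorize as above. For the complexity claim I would observe that the estimator only ever touches the slices $\mat{y}_{\ltilde}$, $\mPhi_{\ltilde,\itilde}$, $\mPhi_{\ltilde,\jtilde}$, $\mmu_{\itilde}$, $\mmu_{\jtilde}$ and $\mat{S}_{\jtilde,\itilde}$: forming the two length-$\widetilde{n}$ vectors $\mPhi_{\ltilde,\itilde}\mmu_{\itilde}$ and $\mPhi_{\ltilde,\jtilde}\mmu_{\jtilde}$ costs $\order{\widetilde{n}\widetilde{m}}$, and the prior quadratic form costs $\order{\widetilde{m}^2}$, so the total is $\order{\widetilde{n}\widetilde{m}+\widetilde{m}^2}$, independent of $n$ and $m$. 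The main subtlety to get right, rather than any heavy calculation, is precisely this independent-subsample device for the squared term.
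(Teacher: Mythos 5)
Your proof is correct and follows essentially the same route as the paper's: both decompose $\meanLoss$ into the same three terms, interpret the sums over $\ell$, $i$, $j$ as expectations under uniform categorical distributions, and rely on the independence of $\itilde$ and $\jtilde$ to obtain an unbiased estimate of the squared data-fit term. The only difference is presentational --- the paper constructs the estimator by iterating ``expand, take expectation, Monte Carlo'' one index at a time, whereas you verify the final estimator's expectation directly --- and you correctly isolate the one genuine subtlety, namely that reusing a single basis-function subsample for both factors of $(\mPhi\mmu)^2$ would introduce a diagonal bias.
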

A proof is provided in \cref{sec:mu_estimator_proof}.
To learn $\mmu$, this estimator can be differentiated to give an~(unbiased) gradient estimate of $\meanLoss$ with respect to $\mmu$. 
These gradient estimates can then be used to perform SGD, and since the gradient estimator is unbiased and $\meanLoss$ is convex in $\mmu$, the process will converge to the unique minimizer of $\meanLoss$ provided an appropriate learning rate schedule 
%\footnote{Using a schedule of learning rates $\{\rho_t\}$ such that $\sum \rho_t = \infty$ and $\sum \rho_t^2 < \infty$.}
is used~\citep{robbins_sgd}.

\lesslines
Using the stochastic estimate of $\meanLoss$ in \cref{thm:mu_estimator} is highly advantageous for SGD since each stochastic gradient evaluation no longer depends on $n$ or $m$.
This is a significant achievement for large problems since the complexity of evaluating this loss has decreased from $\order{nm+m^2} \rightarrow \order{\widetilde{n}\widetilde{m}+\widetilde{m}^2}$, where $\widetilde{n}$ and $\widetilde{m}$ can be chosen to be arbitrarily small~(e.g. small enough to store all matrices in GPU memory).

\subsection{Learning the Variational Covariance}
Having shown that the variational mean parameters can be found using an SGD procedure whose per-iteration complexity is independent of $m$ and $n$, the following theorem provides a similar result for the variational covariance parameters~$\mat{C}$.
\begin{theorem}
\label{thm:cov_estimator}
An unbiased estimator whose evaluation has a complexity independent of $n$ and $m$ can be written
as follows
{%\small%
\begin{align*}
%\hspace{-3.5mm}%
\covLoss(\mat{C}) \approx&
\tfrac{m}{\widetilde{m}} \sum_{r \in \rtilde}
\tfrac{nm^2}{\sigma^2 \widetilde{n}\widetilde{m}^2}
\mat{c}_{\jtilde,r}^T
\mPhi_{\ltilde, \jtilde}^T
\mPhi_{\ltilde, \itilde}
\mat{c}_{\itilde,r}
%\\& \quad
+ 
\tfrac{m^2}{\widetilde{m}^2}
\mat{c}_{\jtilde,r}^T \mat{S}_{\jtilde, \itilde} \mat{c}_{\itilde,r}
- 
2\ \log c_{rr},
\end{align*}
}%
where
$\rtilde \inR{\widetilde{m}}$ contains indices sampled uniformly from $\{1,2,\dots,m\}$.
\end{theorem}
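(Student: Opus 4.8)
The plan is to exploit the column-separable structure of $\covLoss$ and then reuse the quadratic-form estimator already established in \cref{thm:mu_estimator}. First I would rewrite each of the three terms of $\covLoss$ as a single sum over the columns $\mat{c}_r$ of the Cholesky factor $\mat{C}$. Writing $\mSigma = \mat{C}\mat{C}^T = \sum_{r=1}^m \mat{c}_r\mat{c}_r^T$, the data-fit term becomes $\textsf{sum}\big((\mPhi\mat{C})^2\big) = \sum_{r=1}^m \mat{c}_r^T\mPhi^T\mPhi\mat{c}_r$, the prior term becomes $\textsf{tr}(\mat{S}\mSigma) = \sum_{r=1}^m \mat{c}_r^T\mat{S}\mat{c}_r$ by cyclicity of the trace, and, since $\mat{C}$ is lower triangular, the log-determinant term becomes $-\log|\mSigma| = -2\log|\mat{C}| = -2\sum_{r=1}^m \log c_{rr}$. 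Collecting these, $\covLoss(\mat{C}) = \sum_{r=1}^m g(r)$ where $g(r) = \tfrac{1}{\sigma^2}\mat{c}_r^T\mPhi^T\mPhi\mat{c}_r + \mat{c}_r^T\mat{S}\mat{c}_r - 2\log c_{rr}$.

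Next I would build a two-level Monte Carlo estimator. For the outer level, the sum $\sum_{r=1}^m g(r)$ is estimated by uniformly subsampling $\widetilde{m}$ indices $r\in\rtilde$ and rescaling, giving the unbiased estimate $\tfrac{m}{\widetilde{m}}\sum_{r\in\rtilde} g(r)$. For the inner level, I observe that for each fixed $r$ the two quadratic forms in $g(r)$ are structurally identical to the quadratic terms of $\meanLoss$ with $\mmu$ replaced by $\mat{c}_r$; hence the conditionally unbiased estimators from \cref{thm:mu_estimator} apply verbatim, yielding $\tfrac{nm^2}{\sigma^2\widetilde{n}\widetilde{m}^2}\mat{c}_{\jtilde,r}^T\mPhi_{\ltilde,\jtilde}^T\mPhi_{\ltilde,\itilde}\mat{c}_{\itilde,r}$ for the data-fit contribution and $\tfrac{m^2}{\widetilde{m}^2}\mat{c}_{\jtilde,r}^T\mat{S}_{\jtilde,\itilde}\mat{c}_{\itilde,r}$ for the prior contribution. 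The scalar $-2\log c_{rr}$ is left exact, as it depends on a single diagonal entry and is already cheap. Substituting these inner estimators into the outer sum reproduces the claimed expression.

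Finally I would establish overall unbiasedness by the law of iterated expectations. Conditioning on the outer sample $\rtilde$, the inner index sets $\itilde,\jtilde,\ltilde$ are drawn independently of $\rtilde$, so \cref{thm:mu_estimator} gives $\mathbb{E}_{\text{inner}}[\hat g(r)\mid \rtilde] = g(r)$ for each $r\in\rtilde$; taking the outer expectation then returns $\mathbb{E}_{\rtilde}\big[\tfrac{m}{\widetilde{m}}\sum_{r\in\rtilde} g(r)\big] = \sum_{r=1}^m g(r) = \covLoss(\mat{C})$. Complexity independence from $n$ and $m$ follows because the estimator touches only the $\widetilde{m}$ sampled columns, each contracted against $\widetilde{m}\times\widetilde{m}$ and $\widetilde{n}\times\widetilde{m}$ index-restricted blocks. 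The main obstacle I anticipate is the nesting of the two estimators: I must argue carefully that the composition stays unbiased even though the same inner index sets $\itilde,\jtilde,\ltilde$ are shared across all $r\in\rtilde$. This is permissible because unbiasedness requires only conditional unbiasedness of the inner estimator together with linearity of expectation, and independence across the different columns is not needed; nonetheless it is the step most prone to a subtle error, and I would separately verify that the exact, non-quadratic $\log c_{rr}$ term carries the correct $\tfrac{m}{\widetilde{m}}$ prefactor so that its outer subsampling remains unbiased.
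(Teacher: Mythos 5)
Your proposal is correct and follows essentially the same route as the paper: both decompose $\covLoss$ into a sum over the columns $\mat{c}_r$, subsample that sum uniformly over $\rtilde$, and then apply the same nested index-subsampling used for the quadratic forms in \cref{thm:mu_estimator} (the paper simply re-derives those inner estimators explicitly via its expand/expectation/Monte-Carlo iteration rather than citing them). Your explicit treatment of the iterated-expectation argument and of sharing $\itilde,\jtilde,\ltilde$ across all $r\in\rtilde$ is a valid and slightly more careful justification of what the paper leaves implicit.
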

A proof is provided in \cref{sec:cov_estimator_proof}.
This estimator provides significant savings over the exact computation of $\covLoss$ in \cref{eqn:elbo_terms} since the complexity has decreased from 
$\order{nm^2+m^3} \rightarrow \order{\widetilde{n}\widetilde{m}^2+\widetilde{m}^3}$.
Similarly to before, this estimator can be differentiated to give an~(unbiased) gradient estimate of $\covLoss$ with respect to $\mat{C}$ that can be used for SGD.
This estimator makes use of four stochastic estimates: three over the $m$ basis functions, and one over the $n$ training examples.
Hence we call this estimator ``quadruply stochastic'' and the subsequent GP a quadruply stochastic Gaussian process~(QSGP).
As mentioned in the theorem statement, the cost of evaluating this estimator is independent of $n$ and $m$, allowing for highly flexible models to be trained on huge datasets.

In contrast to the proposed approach, modern stochastic variational inference techniques commonly make use of the same mini-batch sampling procedure over the $n$ training points that is identified here, and a second stochastic estimator is used which samples the variational distribution~\citep{hoffman_svi}.
This traditional stochastic variational inference approach does eliminate the per-iteration dependence on $n$, however, even just the act of sampling the variational distribution is at least an \order{m} operation per-iteration before even estimating the ELBO.
In our approach, we perform three levels of basis function sub-sampling that enables the cost per iteration to be independent of $m$. 
To the best of our knowledge, this is the first time in the literature an observation has been made that it is possible to perform mini-batch sampling over basis functions while carrying out stochastic variational inference.

\lesslines
One remaining concern with \cref{thm:cov_estimator} as presented is that there are \order{m^2} variational covariance parameters in $\mat{C}$ which can be expensive to store if $m$ is large.
Instead, one could consider the lower triangular $\mat{C}$ matrix to have a sparse ``chevron'' pattern depicted as 
$\mat{C} = \vcenter{\hbox{\includegraphics[width=0.04\textwidth]{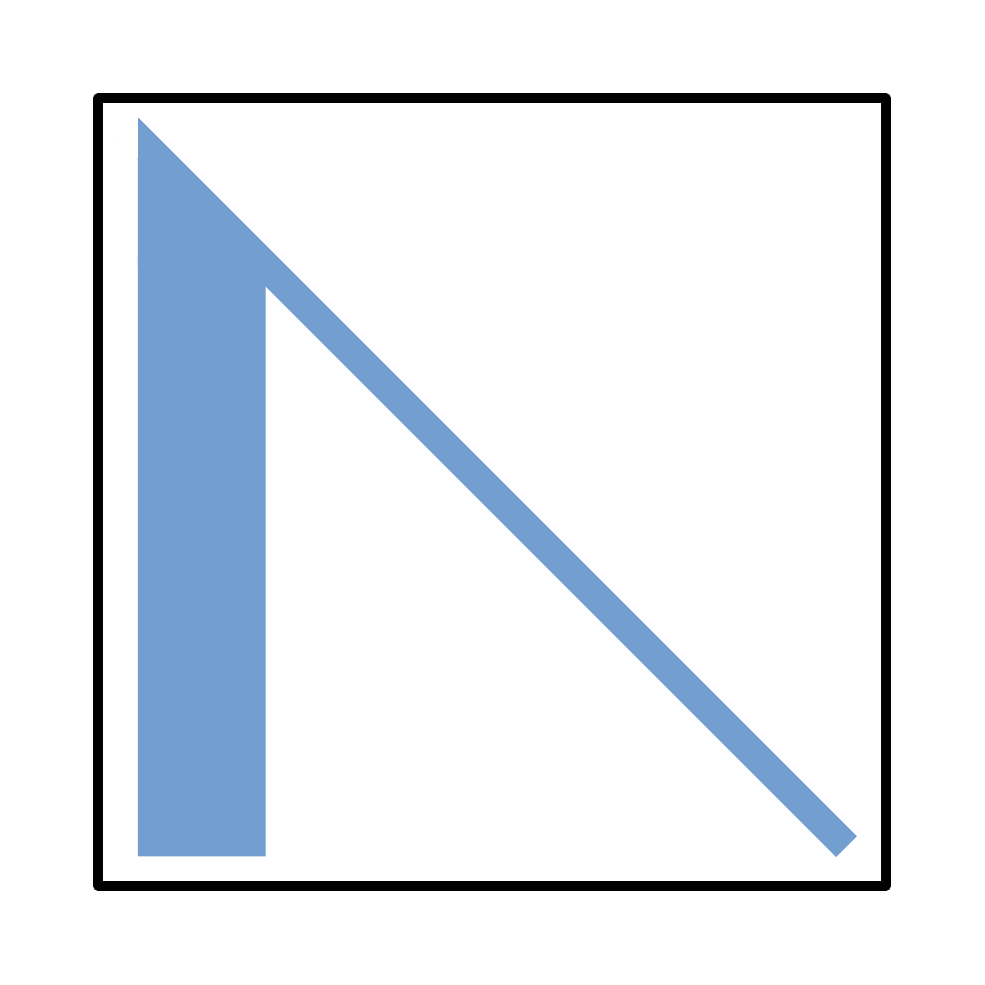}}}$, where the colour indicates non-zero elements.
%follows
%\begin{align*}
%\mat{C} = \vcenter{\hbox{\includegraphics[width=0.06\textwidth]{chevron_cholesky}}}.
%\end{align*}
This parameterization allows important posterior correlations between basis functions to be captured and since only the first few columns of $\mat{C}$ are dense, it ensures that the number of variational parameters scale as \order{m}.
While this parameterization is not invariant to permutations of basis function indexing, \citet{challis_gaussian_vi} demonstrated that it performed comparably to more complex parameterizations.
No matter whether a full or chevron parameterization is employed, 
$\covLoss$ remains convex~\citep{challis_gaussian_vi}, and
the computations in \cref{thm:cov_estimator} can be substantially reduced by exploiting the sparsity patterns of $\mat{C}$.
To further reduce the computational burden, the following result demonstrates that ${c}_{rr}$ from the diagonalized columns of the chevron Cholesky structure can be computed in closed form.
\begin{proposition}
\label{thm:crr_closed_form}
Given the parameterization $\mat{c}_r{=}\mat{e}_r c_{rr}$, where $\mat{e}_i \inR{m}$ is the $i$th unit vector, maximizing the ELBO with respect to $c_{rr}$ gives
\begin{align*}
c_{rr} = \sqrt{\frac{\sigma^2}{\mat{\phi}_r^T\mat{\phi}_r + \sigma^2 s_{rr}}}.
\end{align*}
\end{proposition}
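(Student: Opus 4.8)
The plan is to isolate, within $\covLoss$ from \cref{eqn:elbo_terms}, exactly those terms that depend on the single scalar $c_{rr}$, and then minimize over it. First I would write the variational covariance column-wise: since $\mSigma = \mat{C}\mat{C}^T = \sum_{j} \mat{c}_j \mat{c}_j^T$, each of the three contributions to $\covLoss$ decomposes over the columns of $\mat{C}$. For the Frobenius term, $\tsum\big((\mPhi\mat{C})^2\big) = \|\mPhi\mat{C}\|_F^2 = \sum_j \mat{c}_j^T\mPhi^T\mPhi\mat{c}_j$; for the trace term, $\textsf{tr}(\mat{S}\mSigma) = \sum_j \mat{c}_j^T \mat{S}\mat{c}_j$; and because $\mat{C}$ is lower triangular with positive diagonal, $\log\big|\mSigma\big| = 2\log|\mat{C}| = 2\sum_i \log c_{ii}$.

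Next I would substitute the diagonalized parameterization $\mat{c}_r = \mat{e}_r c_{rr}$. Because $c_{rr}$ appears only in the $r$th column of $\mat{C}$, every summand above except the one indexed by $j=r$ is constant in $c_{rr}$. Using $\mat{e}_r^T\mPhi^T\mPhi\mat{e}_r = \mphi_r^T\mphi_r$ and $\mat{e}_r^T\mat{S}\mat{e}_r = s_{rr}$, the $c_{rr}$-dependent part of $\covLoss$ collapses to $g(c_{rr}) = \tfrac{1}{\sigma^2} c_{rr}^2\, \mphi_r^T\mphi_r + c_{rr}^2\, s_{rr} - 2\log c_{rr}$ up to an additive constant. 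Setting $g'(c_{rr}) = \tfrac{2}{\sigma^2} c_{rr}\, \mphi_r^T\mphi_r + 2 c_{rr}\, s_{rr} - \tfrac{2}{c_{rr}} = 0$ and solving the resulting quadratic in $c_{rr}^2$ yields $c_{rr}^2 = \sigma^2 / (\mphi_r^T\mphi_r + \sigma^2 s_{rr})$; taking the positive root, which is required since $c_{rr} > 0$, gives the claimed expression.

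Finally I would confirm this stationary point is the maximizer of the ELBO rather than a spurious critical point. Since $g''(c_{rr}) = \tfrac{2}{\sigma^2}\mphi_r^T\mphi_r + 2 s_{rr} + \tfrac{2}{c_{rr}^2} > 0$ for all $c_{rr}>0$, using $s_{rr}\ge 0$ from $\mat{S}$ being SPD and $\mphi_r^T\mphi_r\ge 0$, the function $g$ is strictly convex on the positive axis, so the critical point is its unique global minimizer; as the ELBO equals $-\tfrac{1}{2}\covLoss$ plus terms independent of $\mat{C}$, minimizing $\covLoss$ maximizes the ELBO. The only delicate point is the bookkeeping that no cross-column contribution to $\mSigma$ involves $c_{rr}$, which follows immediately from the decomposition $\mSigma = \sum_j \mat{c}_j\mat{c}_j^T$ together with $\mat{c}_r$ having a single nonzero entry; after that observation the minimization reduces to a routine one-dimensional calculus exercise.
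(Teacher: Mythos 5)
Your proposal is correct and follows essentially the same route as the paper's proof: isolate the $\mat{c}_r$-dependent terms of $\covLoss$, substitute $\mat{c}_r = \mat{e}_r c_{rr}$ to obtain the one-dimensional objective $\big(\tfrac{1}{\sigma^2}\mphi_r^T\mphi_r + s_{rr}\big)c_{rr}^2 - 2\log c_{rr}$, and solve the first-order condition. Your additional convexity check and the explicit column-wise decomposition of $\mSigma$ are sound elaborations of steps the paper leaves implicit.
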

\vspace{-3mm}
A proof is provided in \cref{sec:crr_closed_form_proof}.

\vspace{-1mm}
\subsection{Empirical Bayes}
\label{sec:empirical_bayes}
We have so far described how to learn the variational mean and covariance parameters $\mmu$ and $\mat{C}$ while keeping the prior constant.
Practitioners often choose to modify the prior by maximizing the marginal likelihood~(or evidence) with respect to a set of hyperparameters.
This is referred to as type-II inference or empirical Bayes and we discuss how this can be performed with the QSGP technique to estimate hyperparameters in the kernel $k$, which in turn affect $\mat{S}$, and $\mPhi$. %, and $\sigma^2$.
We will perform empirical Bayes by maximizing the ELBO in \cref{eqn:elbo}, which is of course a biased surrogate for~(i.e. a lower bound of) the log-marginal likelihood, however, it is a widely used approach that performs well in practice~\citep{titsias_vfe}.
Referring to the ELBO notation in \cref{eqn:elbo_terms}, we have already shown how to efficiently estimate $\meanLoss$ and $\covLoss$, however, the term $\constLoss$ also depends on the GP prior so it must be considered as well.
The challenging term in $\constLoss$ from \cref{eqn:elbo_terms} is $\log \big|\mat{S}\big|$ which can be computed cheaply only in special cases~(e.g. if $\mat{S}$ is diagonal) but is expensive to compute in the general case.
Some kernel approximations that natively%
\footnote{Any kernel can be written to have a diagonal $\mat{S}$ using a linear transformation of features, however, performing this transformation would cost \order{m^3} in general.}
admit a diagonal $\mat{S}$ matrix include random Fourier features~\citep{rahimi_rff, quin_ssgp},
grid-structured eigenfunctions~\citep{evans_gpgrief}, and
relevance vector machines~\citep{tipping_rvm}
which allow empirical Bayes to be easily performed using the following estimator
\begin{proposition}
\label{thm:const_estimator}
Assuming a diagonal $\mat{S}$, an unbiased estimator whose evaluation complexity is independent of $n$ and $m$ can be written as follows
\begin{align*}
\constLoss \approx
- \tfrac{m}{\widetilde{m}}\sum_{i \in \itilde} \log s_{ii}
- m
+ n\ \log(2\pi \sigma^2) + 
\tfrac{n}{\sigma^2\widetilde{n}} \mat{y}_{\ltilde}^T\mat{y}_{\ltilde}.
\end{align*}
\end{proposition}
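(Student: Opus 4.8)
The plan is to take each term in the expression for $\constLoss$ given in \cref{eqn:elbo_terms} and either keep it as an exact constant or replace it with an unbiased Monte Carlo estimate formed by uniform sub-sampling, and then conclude unbiasedness of the whole by linearity of expectation.

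First I would simplify the log-determinant term. Writing $\log\big|2\pi\mat{S}^{-1}\big| = m\log(2\pi) - \log|\mat{S}|$ and invoking the diagonality hypothesis to get $\log|\mat{S}| = \sum_{i=1}^m \log s_{ii}$, the $(2\pi)$ contribution cancels against the $-m\log(2\pi)$ term already present in $\constLoss$, leaving the single sum $-\sum_{i=1}^m \log s_{ii}$. This is the only place the diagonal assumption is used, and it is exactly what converts an otherwise $\order{m^3}$ determinant into a separable sum that is amenable to sub-sampling.

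Next I would construct the two stochastic estimates. For $\sum_{i=1}^m \log s_{ii}$, since each index in $\itilde$ is drawn uniformly from $\{1,\dots,m\}$, each sampled summand has expectation $\tfrac{1}{m}\sum_{j=1}^m \log s_{jj}$; scaling the empirical sum by $\tfrac{m}{\widetilde{m}}$ therefore yields an unbiased estimate, so $-\tfrac{m}{\widetilde{m}}\sum_{i\in\itilde}\log s_{ii}$ is unbiased for $-\sum_{i=1}^m\log s_{ii}$. The identical argument applied to $\tfrac{\mat{y}^T\mat{y}}{\sigma^2} = \tfrac{1}{\sigma^2}\sum_{i=1}^n y_i^2$, using the uniformly sampled indices in $\ltilde$, shows that $\tfrac{n}{\sigma^2\widetilde{n}}\mat{y}_{\ltilde}^T\mat{y}_{\ltilde}$ is unbiased for it. The remaining two terms, $-m$ and $n\log(2\pi\sigma^2)$, are deterministic constants and are carried through unchanged. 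Summing the estimates and applying linearity of expectation then delivers the claimed estimator together with its unbiasedness.

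Finally I would verify the complexity claim: the two sub-sampled sums involve only $\widetilde{m}$ and $\widetilde{n}$ terms respectively, while the constants $-m$ and $n\log(2\pi\sigma^2)$ are each a single scalar operation, so no loop over the $n$ data points or $m$ basis functions is required and the evaluation cost is independent of $n$ and $m$. There is no genuinely hard step here; the entire content is the observation in the first step that the diagonal structure collapses the log-determinant into a separable sum, after which unbiased uniform sub-sampling is routine. The one point to state carefully is that the scalings $\tfrac{m}{\widetilde{m}}$ and $\tfrac{n}{\widetilde{n}}$ are the correct normalization for uniform sampling, so that linearity of expectation applies term-by-term irrespective of any repeated sampled indices.
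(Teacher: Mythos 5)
Your proposal is correct and follows essentially the same route as the paper's proof: expand $\log\bigl|2\pi\mat{S}^{-1}\bigr|$ using diagonality so the $m\log(2\pi)$ terms cancel, then replace the two remaining sums over $m$ and $n$ with uniformly sub-sampled, appropriately rescaled unbiased Monte Carlo estimates. No gaps.
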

\vspace{-2mm}
\lesslines
The proof is outlined in \cref{sec:const_estimator_proof}.
We now have stochastic estimators for each term of the ELBO in \cref{eqn:elbo_terms} and have therefore demonstrated how an unbiased approximator of the ELBO can be performed in \order{1}. %, a complexity that is independent of both the number of basis functions $m$ and the number of training data-points $n$.
While \cref{thm:const_estimator} requires a diagonal $\mat{S}$, this estimator only needs to be included when empirical Bayes is being performed.
In the fully Bayesian case where the prior is fixed, only \cref{thm:mu_estimator,thm:cov_estimator} need to be considered, and these both assume a general, dense $\mat{S}$.
To extend the result in \cref{thm:const_estimator} to the case of a general $\mat{S}$, it may be possible to use a ``Russian roulette'' estimator following \citet{chen_stochastic_log_det} to provide a stochastic estimate of $\log|\mat{S}|$, however, this is left for future work.

\vspace{-1mm}
\subsection{Control Variates}
\label{sec:control_variate}
We discuss here how to reduce the variance of the Monte Carlo estimators introduced in \cref{thm:mu_estimator,thm:cov_estimator,thm:const_estimator}.
Variance reduction is an important consideration in practice since it affects the rate of convergence of stochastic gradient descent~(e.g. \citep{gower_sgd}).
We focus on techniques that will reduce variance while still remaining unbiased.
A classic technique to reduce variance of Monte Carlo estimators is to introduce a control variate~\citep{mcbook}.
We consider reducing the variance of the term 
$
\tfrac{nm^2}{\sigma^2\widetilde{n}\widetilde{m}^2}
\mmu_{\jtilde}^T
\mPhi_{\ltilde, \jtilde}^T
\mPhi_{\ltilde, \itilde}
\mmu_{\itilde}
$
in \cref{thm:mu_estimator}
by adding the following terms to the $\meanLoss$ estimator
\begin{align}
\label{eqn:control_variate_phiTphi}
%\tfrac{\sigma^2nm^2}{\widetilde{n}\widetilde{m}^2}
%\mmu_{\jtilde}^T
%\mPhi_{\ltilde, \jtilde}^T
%\mPhi_{\ltilde, \itilde}
%\mmu_{\itilde}
-
\tfrac{nm^2}{\sigma^2\widebar{n}\widetilde{m}^2}
\mmu_{\jtilde}^T
\mPhi_{\mat{p}, \jtilde}^T
\mPhi_{\mat{p}, \itilde}
\mmu_{\itilde}
+
\tfrac{n}{\sigma^2\widebar{n}}
\mmu^T
\mPhi_{\mat{p}, :}^T
\mPhi_{\mat{p}, :}
\mmu,
\end{align}
where the negative control variate is the first term~(which is stochastic) and the second term is the expectation of the control variate~(which is deterministic).
The fixed set $\mat{p} \inR{\widebar{n}}$ contains indices of $\widebar{n}$ support points that are randomly sub-sampled from the training set before SGD iterations begin.
This control variate can reduce variance of the Monte Carlo estimator if there is correlation between the elements of 
$
\tfrac{n}{\widebar{n}}
\mPhi_{\mat{p}, :}^T
\mPhi_{\mat{p}, :}
$
and
$
\mPhi^T\mPhi
$.
We speculate that there would be correlation %between the elements of these matrices 
since the control variate uses an unbiased low-rank approximation of $\mPhi^T\mPhi$.
Additionally, it can be easily seen that the expectation of \cref{eqn:control_variate_phiTphi} is zero, therefore adding this to the \cref{thm:mu_estimator} estimator does not introduce any bias.
\Cref{eqn:control_variate_phiTphi} could also be scaled by a control variate coefficient to further reduce variance~\citep{mcbook}.

We can choose $\widebar{n} \ll n$ such that the first term in \cref{eqn:control_variate_phiTphi} can be computed cheaply, however, computation of the second term~(the control variate expectation) evidently requires $\order{m}$ computations.
The following result demonstrates how this second term can be computed in \order{1} per SGD iteration.
\begin{proposition}
\label{thm:cv_updates}
The expected value of the control variate in \cref{eqn:control_variate_phiTphi} can be computed at iteration~$t$ with a complexity independent of~$n$ and~$m$ as follows 
\begin{align*}
\tfrac{n}{\sigma^2\widebar{n}}
\mmu^T
\mPhi_{\mat{p}, :}^T
\mPhi_{\mat{p}, :}
\mmu
&=
\tfrac{n}{\sigma^2\widebar{n}}
\mat{a}^{(t)\,T}\mat{a}^{(t)},
\end{align*}
where
$\mat{a}^{(t)} = \mat{a}^{(t{-}1)} + \mPhi_{\mat{p},\itilde \cup \jtilde}\big(\mmu_{\itilde \cup \jtilde} - \mmu^{(t{-}1)}_{\itilde \cup \jtilde}\big) \inR{\widebar{n}}$,
$\mmu^{(t{-}1)} \inR{m}$ is the value of $\mmu$ at the end of iteration~$t{-}1$,
$\mat{a}^{(0)} = \mPhi_{\mat{p}, :}\mmu^{(0)}$,
$\mmu_{\itilde \cup \jtilde}$ are all the variables being updated at the current SGD iteration,
and
$\mat{a}^{(t)}$ is saved once $\mmu_{\itilde \cup \jtilde}$ has been updated at the end of iteration~$t$.
\end{proposition}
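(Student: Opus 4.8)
The plan is to exploit the fact that the deterministic control-variate expectation is a quadratic form in $\mmu$ that factors through the low-dimensional vector $\mat{a}^{(t)} := \mPhi_{\mat{p}, :}\mmu^{(t)} \inR{\widebar{n}}$, and then to show that this vector can be carried forward across SGD iterations by a cheap low-rank update rather than recomputed from scratch. First I would record the elementary algebraic identity $\mmu^T\mPhi_{\mat{p}, :}^T\mPhi_{\mat{p}, :}\mmu = (\mPhi_{\mat{p}, :}\mmu)^T(\mPhi_{\mat{p}, :}\mmu) = \mat{a}^{(t)\,T}\mat{a}^{(t)}$, where $\mmu = \mmu^{(t)}$ is the current iterate. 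This reduces the proposition to two subclaims: (i) the recursion for $\mat{a}^{(t)}$ stated in the proposition maintains the invariant $\mat{a}^{(t)} = \mPhi_{\mat{p}, :}\mmu^{(t)}$, and (ii) each such update can be evaluated with a cost independent of $n$ and $m$.

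For subclaim (i) I would argue by induction on $t$. The base case holds by definition, since $\mat{a}^{(0)} = \mPhi_{\mat{p}, :}\mmu^{(0)}$. For the inductive step, the crucial observation is that one iteration of the SGD scheme associated with \cref{thm:mu_estimator} modifies only the coordinates of $\mmu$ indexed by $\itilde \cup \jtilde$, leaving all remaining entries fixed between iterations $t{-}1$ and $t$. Writing $\Delta\mmu = \mmu^{(t)} - \mmu^{(t{-}1)}$, this means $\Delta\mmu$ is supported on $\itilde\cup\jtilde$, so that $\mPhi_{\mat{p},:}\Delta\mmu = \mPhi_{\mat{p},\itilde\cup\jtilde}\big(\mmu_{\itilde\cup\jtilde} - \mmu^{(t{-}1)}_{\itilde\cup\jtilde}\big)$. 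Adding this to the inductive hypothesis $\mat{a}^{(t{-}1)} = \mPhi_{\mat{p},:}\mmu^{(t{-}1)}$ reproduces the stated recursion and yields $\mat{a}^{(t)} = \mPhi_{\mat{p},:}\mmu^{(t)}$, closing the induction and hence establishing the displayed identity.

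For subclaim (ii) I would carry out the complexity bookkeeping on the recursion. The matrix $\mPhi_{\mat{p},\itilde\cup\jtilde}$ has $\widebar{n}$ rows and at most $2\widetilde{m}$ columns, so the matrix--vector product costs $\order{\widebar{n}\widetilde{m}}$, while the vector addition and the final inner product $\mat{a}^{(t)\,T}\mat{a}^{(t)}$ each cost $\order{\widebar{n}}$. Since $\widebar{n}$ and $\widetilde{m}$ are fixed method hyperparameters, every per-iteration operation is independent of $n$ and $m$. The one apparent exception, namely the initialization $\mat{a}^{(0)} = \mPhi_{\mat{p},:}\mmu^{(0)}$ which is a dense $\order{\widebar{n}m}$ product, is incurred only once before SGD begins and therefore does not enter the per-iteration complexity that the proposition asserts.

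I expect the only real subtlety, rather than a genuine obstacle, to be making the sparsity argument precise: one must verify that the mean estimator of \cref{thm:mu_estimator} updates exactly the entries in $\itilde\cup\jtilde$ and no others, and that $\mat{a}^{(t{-}1)}$ is the quantity stored at the end of the previous step, so that the invariant is genuinely inductive and the recursion never needs access to the full $\mmu^{(t{-}1)}$. Everything else is routine linear algebra together with a standard incremental-update cost count.
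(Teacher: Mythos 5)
Your proposal is correct and follows essentially the same route as the paper, which proves the result by the single observation that the invariant $\mat{a}^{(t-1)} = \mPhi_{\mat{p},:}\mmu^{(t-1)}$ is preserved under the sparse update (so the recursion costs only \order{\widebar{n}\widetilde{m}} per iteration); your induction and cost bookkeeping simply make that observation explicit. The one subtlety you flag---that the SGD step must touch only the coordinates in $\itilde\cup\jtilde$---is indeed the load-bearing assumption, and the paper handles it by requiring an optimizer with sparse update directions such as plain SGD or AdaGrad.
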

\vspace{-1mm}
This result can be easily proven by observing that $\mat{a}^{(t{-}1)} = \mPhi_{\mat{p}, :}\mmu^{(t{-}1)}$.
Clearly the cost of updating $\mat{a}^{(t)}$ at the end of iteration~$t$ requires just \order{\widebar{n}\widetilde{m}} time, and 
evidently if optimization begins at $\mmu^{(0)} =\mat{0}$, then we initialize $\mat{a}^{(0)} = \mat{0}$.
The requirement for sparse update directions can be met by simply choosing an appropriate optimizer such as regular gradient descent, or AdaGrad~\citep{adagrad}.
Computation of unbiased, sparse gradients of \cref{thm:cv_updates} is detailed in \cref{sec:control_variate_supplement}.

Finally, we note that the control variate in \cref{eqn:control_variate_phiTphi} can also be directly used in \cref{thm:cov_estimator} by simply replacing $\mmu$ with $\mat{c}_r$.
Additionally, we can derive control variates for the other terms in \cref{thm:mu_estimator,thm:cov_estimator} which we discuss in \cref{sec:control_variate_supplement}.

\section{ELBO Lower Bound Estimator in \order{1}}
\label{sec:elbo_lower_bound}
Previously we assumed that the likelihood is Gaussian, and now we generalize our results for other likelihoods so that different types of learning problems can be addressed~(e.g. classification).
In this section we aim to develop a variational bound that can be estimated in \order{1} for a wider class of likelihoods.
For many likelihoods of interest, the second term in \cref{eqn:elbo} can be written as
\begin{align}
\label{eqn:site_projection}
\mathbb{E}_{q(\mat{w})}\big[ \log \Pr(\mat{y}|\mat{X}, \mat{w}) \big]
{=}
\hspace{-0.5mm}\sum_{\ell=1}^n\mathbb{E}_{q(\mat{w})}\big[ \log g_{\ell}(\mphi_{\ell,:}\mat{w}) \big],
\end{align}
where $g_\ell: \mathbb{R} \rightarrow \mathbb{R}$ is referred to as a site projection~\citep{challis_gaussian_vi}.
Examples of site projections for different likelihoods are provided in \cref{sec:site_projection_notes}.
Additionally, since we assume $q(\mat{w})$ is Gaussian, we can re-write \cref{eqn:site_projection} as a one-dimensional expectation~\citep{barber_bishop_bnn, kuss_rasmussen_gp_classification, challis_gaussian_vi}%see refs after equation 6 in challias & barber
\begin{align}
\label{eqn:1d_expectation}
&\mathbb{E}_{q(\mat{w})}\big[ \log \Pr(\mat{y}|\mat{X}, \mat{w}) \big]
=
\sum_{\ell=1}^n\mathbb{E}_{\mathcal{N}(z|0,1)}\Big[ 
%\\& \quad \quad \quad \quad \quad \quad \nonumber
\log g_{\ell}\big(\mphi_{\ell,:}\mmu + z\:\mphi_{\ell,:}\mat{\Sigma}\mphi_{\ell,:}^T\big) \Big],
\end{align}
\lesslines
which can be easily approximated using quadrature methods when the integral is not analytically tractable.
The following result demonstrates how this likelihood expectation can be estimated in \order{1}.
\begin{theorem}
\label{thm:elbo_lower_bound}
Assuming the site projection $g_\ell$ is log-concave, the following inequality holds
\begin{align*}
&\mathbb{E}_{q(\mat{w})}\big[ \log \Pr(\mat{y}|\mat{X}, \mat{w}) \big]
%\\& 
\geq
\tfrac{n}{\widetilde{n}}\ \mathbb{E}\bigg[ 
\sum_{\ell \in \ltilde}\log g_{\ell}\Big(
\tfrac{m}{\widetilde{m}}\mphi_{\ell,\itilde}\mat{\mu}_{\itilde} 
+ \tfrac{m^3}{\widetilde{m}^3}z\;\mphi_{\ell,\jtilde}\mat{C}_{\jtilde,\rtilde}\mat{C}_{\itilde,\rtilde}^T\mphi_{\ell,\itilde}^T
\Big) \bigg],
\end{align*}
where the expectation on the right-hand side is taken over
$\itilde,\ \jtilde,\ \rtilde \inR{\widetilde{m}}$%which are all distributed according to an independent and uniform categorical distribution over $\{1,\dots,m\}$, and
, $\ltilde \inR{\widetilde{n}}$, and
$z\sim \mathcal{N}(0,1)$.
\end{theorem}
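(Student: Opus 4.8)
The plan is to begin from the exact one-dimensional expectation in \cref{eqn:1d_expectation} and introduce stochasticity in three layers: basis-function sub-sampling to form unbiased estimators of the per-point posterior mean $\mphi_{\ell,:}\mmu$ and variance $\mphi_{\ell,:}\mSigma\mphi_{\ell,:}^T$, a single application of Jensen's inequality that turns these unbiased \emph{argument} estimates into a valid \emph{lower bound} on $\log g_\ell$, and finally a Monte Carlo sub-sampling of the outer sum over the $n$ training points. The appearance of an inequality here (rather than the equalities of \cref{thm:mu_estimator,thm:cov_estimator}) is unavoidable precisely because $\log g_\ell$ is nonlinear, so an unbiased estimate of its argument does not produce an unbiased estimate of $\log g_\ell$ itself; log-concavity of $g_\ell$ is exactly the hypothesis that pins down the direction of the resulting bias.

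First I would build the two inner estimators, reusing the sampling arguments behind \cref{thm:mu_estimator,thm:cov_estimator}. For the mean, $\tfrac{m}{\widetilde{m}}\mphi_{\ell,\itilde}\mmu_{\itilde}$ is an unbiased estimate of $\mphi_{\ell,:}\mmu=\sum_{i=1}^m\phi_{\ell i}\mu_i$. For the variance, I would use $\mSigma=\mat{C}\mat{C}^T$ to write $\mphi_{\ell,:}\mSigma\mphi_{\ell,:}^T=\sum_{r=1}^m\big(\sum_{i=1}^m\phi_{\ell i}c_{ir}\big)\big(\sum_{j=1}^m\phi_{\ell j}c_{jr}\big)$ and replace each of the three independent sums by its own Monte Carlo estimate over $\itilde$, $\jtilde$, $\rtilde$, yielding $\tfrac{m^3}{\widetilde{m}^3}\mphi_{\ell,\jtilde}\mat{C}_{\jtilde,\rtilde}\mat{C}_{\itilde,\rtilde}^T\mphi_{\ell,\itilde}^T$; taking the expectation over the three index sets and factoring it by their mutual independence recovers $\mphi_{\ell,:}\mSigma\mphi_{\ell,:}^T$, which is what fixes $\tfrac{m^3}{\widetilde{m}^3}$ as the correct unbiasing constant. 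The key observation is that, conditioned on $z$, the full argument $A_\ell:=\tfrac{m}{\widetilde{m}}\mphi_{\ell,\itilde}\mmu_{\itilde}+\tfrac{m^3}{\widetilde{m}^3}z\,\mphi_{\ell,\jtilde}\mat{C}_{\jtilde,\rtilde}\mat{C}_{\itilde,\rtilde}^T\mphi_{\ell,\itilde}^T$ has expectation over $(\itilde,\jtilde,\rtilde)$ equal to the exact argument $\mphi_{\ell,:}\mmu+z\,\mphi_{\ell,:}\mSigma\mphi_{\ell,:}^T$ of \cref{eqn:1d_expectation}, and that linearity of expectation makes the index set $\itilde$ shared between the two estimators harmless.

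The central step is then Jensen's inequality applied conditionally on $z$. Since $g_\ell$ is log-concave, $\log g_\ell$ is concave, so
\begin{align*}
\mathbb{E}_{\itilde,\jtilde,\rtilde}\big[\log g_\ell(A_\ell)\,\big|\,z\big]
\le
\log g_\ell\big(\mathbb{E}_{\itilde,\jtilde,\rtilde}[A_\ell\,|\,z]\big)
=
\log g_\ell\big(\mphi_{\ell,:}\mmu + z\,\mphi_{\ell,:}\mSigma\mphi_{\ell,:}^T\big).
\end{align*}
Integrating over $z\sim\mathcal{N}(0,1)$ recovers exactly the per-point integrand of \cref{eqn:1d_expectation} on the right, so summing over $\ell=1,\dots,n$ gives $\sum_\ell\mathbb{E}[\log g_\ell(A_\ell)]\le\mathbb{E}_{q(\mat{w})}[\log\Pr(\mat{y}|\mat{X},\mat{w})]$. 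Finally I would sub-sample the outer sum: because the bound is now a plain sum over $\ell$, the estimate $\tfrac{n}{\widetilde{n}}\sum_{\ell\in\ltilde}(\cdot)$ is unbiased for it and therefore preserves the inequality, producing the stated result.

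I expect the main obstacle to be the careful ordering and conditioning of the nested expectations: the two argument estimators must be shown unbiased \emph{conditional on} $z$ and the $\ell$ mini-batch, Jensen must be invoked at that conditional level \emph{before} integrating over $z$, and one must verify that the subsequent $z$-integration and $\ell$-sub-sampling are linear operations that neither reverse nor weaken the inequality. A secondary point worth stating explicitly is the independence of $\itilde,\jtilde,\rtilde$, since it is what licenses factoring the variance estimator's expectation into the product of three separate Monte Carlo means.
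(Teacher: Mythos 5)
Your proposal is correct and follows essentially the same route as the paper's proof: rewrite the argument of $\log g_\ell$ in \cref{eqn:1d_expectation} as an expectation over three independent index sets, invoke concavity of $\log g_\ell$ via Jensen's inequality to pull that expectation outside, and handle the sum over $n$ by unbiased mini-batch sampling. Your explicit remarks on conditioning on $z$, the harmlessness of the shared index set $\itilde$, and the independence of $\itilde,\jtilde,\rtilde$ are correct refinements of details the paper leaves implicit.
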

\vspace{-1mm}
The proof can be found in \cref{sec:elbo_lower_bound_proof}.
Note that we say $f(x)$ is log-concave if $\log f(x)$ is concave in $x$.
This result is clearly more general than the results presented in \cref{sec:unbiased_elbo} which were restricted to a Gaussian likelihood.
In fact, several commonly used likelihoods admit log-concave site potentials including
Gaussian likelihoods, Laplace likelihoods~(commonly used for robust regression), and logistic likelihoods~(commonly used for classification).
For Gaussian and Laplace likelihoods, the expectation over $z$ in \cref{thm:elbo_lower_bound} can be computed analytically but for a logistic likelihood it must be approximated numerically, ideally using quadrature methods~\citep{challis_gaussian_vi}.
Additionally, by extending the observations of \citet{challis_gaussian_vi} it can be shown that the ELBO approximation remains concave in the variational parameterizations discussed when the estimator in \cref{thm:elbo_lower_bound} is employed. % since its just an expectation over concave functions

As an interesting side note, maximization of the lower bound in \cref{thm:elbo_lower_bound} with $\mat{\Sigma}=\mat{0}$ is identical to a maximum likelihood learning procedure using dropout~\citep{dropout} with a dropout rate of $\tfrac{m-\widetilde{m}}{m}$.
Therefore it is evident that the dropout objective lower bounds the log-likelihood when applied to linear models~(or the final layer of a neural network) when using a likelihood with log-concave site potentials.
As a result, dropout can achieve regularization through this biasing of its original objective.

Optimizing the lower bound of the ELBO in \cref{thm:elbo_lower_bound} does introduce bias into the inference procedure, and 
it can be shown that the bias depends on the variance of the estimator over $\itilde, \jtilde,$ and~$\rtilde$.
To see this, consider an extreme example where we set $\widetilde{m}=m$ such that we perform computations with the full batch.
In this case, the estimator over $\itilde, \jtilde, \rtilde$ has zero variance, and the equality in \cref{thm:elbo_lower_bound} holds.
Evidently, the bias decreases as $\widetilde{m}$ increases and is eliminated when $\widetilde{m}=m$.

Another point of consideration is that the bias of the result in \cref{thm:elbo_lower_bound} reduces as the curvature of $\log g_\ell$ decreases~(with the bias being eliminated when $\log g_\ell$ is linear).
This result is a property of Jensen's inequality~(which was used to prove \cref{thm:elbo_lower_bound}) and 
has implications on the choice of likelihood used.
For example, the form of $\log g_\ell$ for Gaussian, Laplacian, and Logistic likelihoods are~(shifted and scaled)
quadratic, absolute value, and softplus functions, respectively~(we provide the specific forms of these in \cref{sec:site_projection_notes}).
Empirically we find that the bias is small with the Laplace and Logistic likelihoods which would be expected since both the absolute value and softplus functions are effectively piecewise linear~(the absolute value function being exactly so).
Conversely, a quadratic function generally behaves linearly nowhere and we find that the estimator in \cref{thm:elbo_lower_bound} exhibits high bias when applied to a Gaussian likelihood.
Of course, the estimators in \cref{sec:unbiased_elbo} should instead be applied when using a Gaussian likelihood since they give an unbiased approximation of the ELBO.

\vspace{-2mm}
\section{Numerical Studies}
\label{sec:studies}

\paragraph*{Classification Stress Testing}
We consider here a binary classification problem with $n=60000$ where we predict whether MNIST digits are odd or even integers.
For this problem, we consider random Fourier features to approximate a squared exponential kernel~\citep{rahimi_rff, quin_ssgp}.
Random Fourier features are attractive since they natively admit a diagonal $\mat{S}$ matrix which allows empirical Bayes to be easily performed, and 
the features are randomly generated rather than data dependent so the dataset does not need to be stored after training.
Further, storing the random features can be extremely cheap since we can just save the random seeds and regenerate them as needed~\citep{yan_doubly_stochastic_gp}.

To perform inference on this classification problem, we use the stochastic ELBO lower bound estimator in \cref{thm:elbo_lower_bound} with
a logistic likelihood,
101~quadrature points for the integral over~$z$,
$m = 10^6$ random Fourier features, and 
mini-batch sizes of $\widetilde{m} = 20000$ and $\widetilde{n}=100$.
We learn a mean-field variational distribution while simultaneously performing empirical Bayes to estimate the kernel lengthscale and variance.
The inference procedure on this huge model was performed%
\footnote{The authors' code can be found in the supplementary material.}
in just 11.1~minutes on a machine with a NVIDIA Quadro M5000 GPU, and
considering the predictive posterior median, we achieved a hold-out accuracy of $97.85\%$ and a mean negative-log-probability of 0.068 on the test set.
We note that this bests the benchmark set by \citet{hensman_svgp} when approximating the same kernel using SVGP where they achieved an accuracy and mean negative-log-probability of 
$97.8\%$ and 0.069, respectively.

To further push the capabilities of the proposed QSGP inference approach, using the same experimental setup but with $m=10^7$ further decreased the mean negative-log-probability to 0.063 on the test set.
It is promising that accurate inference can be performed while sampling only $\tfrac{\widetilde{m}}{m} = 0.2\%$ of the basis functions at each SGD iteration.\
This stress test also demonstrates our observation that once $\widetilde{m}$ is sufficiently large, increasing $m$ does not appear to effect the bias in \cref{thm:elbo_lower_bound}, nor does it seem to greatly effect the gradient variance.
Therefore we find it advisable to choose $m$ as high as possible, subject to practical constraints.
We emphasize that such a recommendation is not possible with existing models which scale poorly in $m$.
For instance, the cost of each SGD iteration is \order{m^3} for SVGP compared to \order{1} for the proposed QSGP approach.

\paragraph*{Control Variate Studies}
\begin{wrapfigure}{r}{55mm}
\centering
\vspace{-9mm}
\caption{
Effect of control variate rank~$\widebar{n}$ on the Monte Carlo variance of the objective 
$
\tfrac{1}{\sigma^2}
\mmu^T
\mPhi^T
\mPhi
\mmu
$,
and the averaged estimator gradient variance with respect to~$\mmu$.
Results are normalized.
\vspace{1mm}
}
\includegraphics[width=\linewidth]{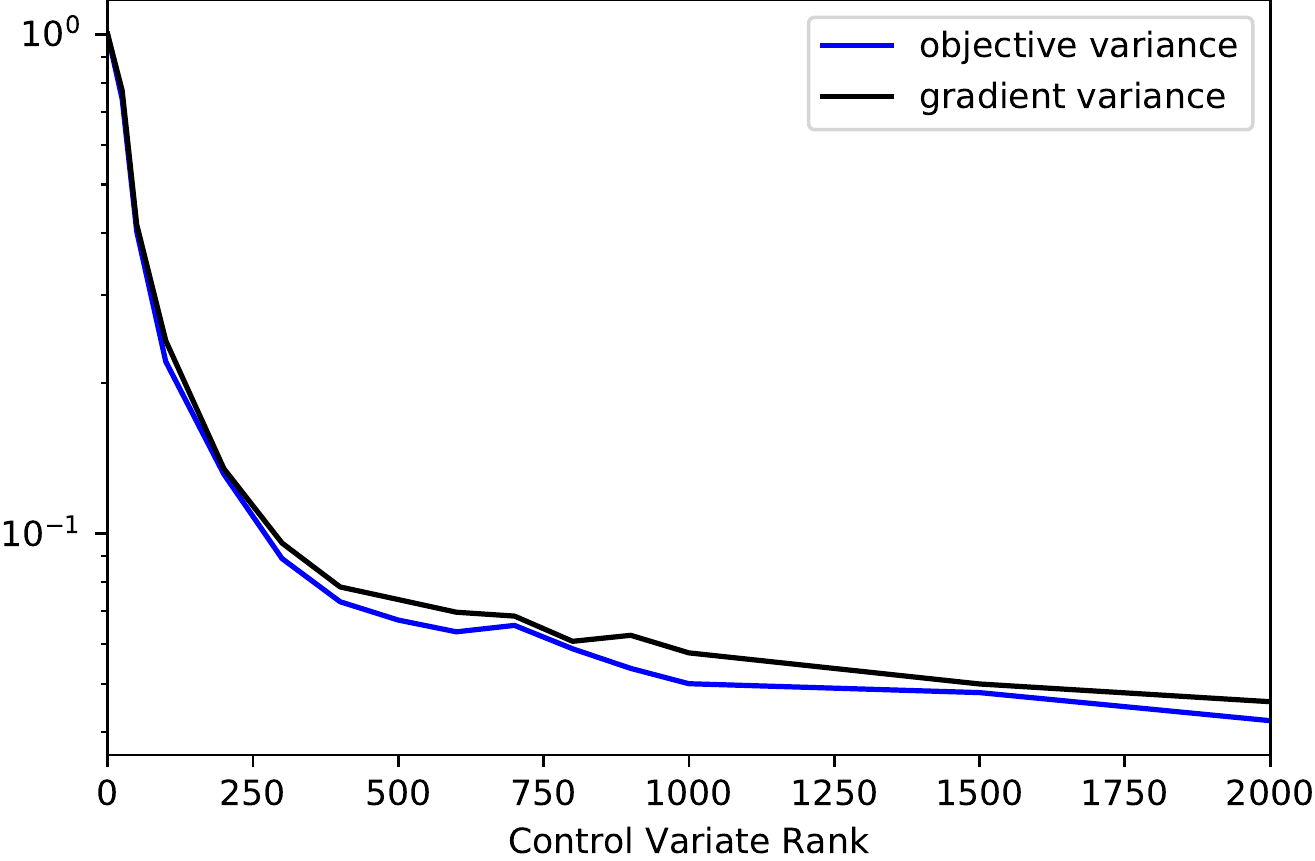}
\label{fig:control_variate}
\vspace{-10mm}
\end{wrapfigure}
\lesslines
In this section, we assess the control variate outlined in \cref{eqn:control_variate_phiTphi} and \cref{thm:cv_updates}.
To do this, we compare the variance of the Monte Carlo estimator
$
\tfrac{nm^2}{\sigma^2\widetilde{n}\widetilde{m}^2}
\mmu_{\jtilde}^T
\mPhi_{\ltilde, \jtilde}^T
\mPhi_{\ltilde, \itilde}
\mmu_{\itilde}
$
from \cref{thm:mu_estimator} both with and without the control variate.
We compare the variance of both the value of this Monte Carlo objective as well as the gradient of the estimator with respect to $\mmu$.
For the study, we consider the \emph{kin40k} dataset~($n=40000$, $d=8$) from the UCI repository, with $m=10000$ random Fourier features~\citep{rahimi_rff} to approximate the same kernel used at initialization of the regression studies of the following section.
We also set $\mmu$ to be a sample from the prior, i.e. $\mmu \sim \mathcal{N}(\mat{0}, \mat{S}^{-1})$, and we use $\widetilde{m} = \widetilde{n}=500$.
\Cref{fig:control_variate} plots both the variance of the objective, as well as the variance of its gradient with respect to $\mmu$, averaged over all elements.
These variance values are plotted with respect to the rank of the control variate used in the Monte Carlo estimator,~$\widebar{n}$.
For each $\widebar{n}$, the estimator is evaluated 1000 times to obtain the variance.
Both the objective and gradient variance are normalized with respect to the variance at $\widebar{n}=0$ where no control variate is used.
%In this way, the left-hand side of the plot begins at unity.
The results in \cref{fig:control_variate} show a rapid decrease in both objective and gradient variance, even with a small control variate rank.
It is promising that even though $n=40000$, when the control variate rank is just $\widebar{n}\approx 300$ the variance has already decreased by an order of magnitude.
This reduced gradient variance accelerates the convergence of SGD~(e.g. \citep{gower_sgd}).

\paragraph*{Regression Studies}
\lesslines
We consider here large regression datasets from the UCI repository using the proposed QSGP inference procedure.
We use a chevron Cholesky variational covariance parameterization,
use random Fourier features to approximate a squared exponential kernel with automatic relevance determination~\citep{rahimi_rff, quin_ssgp}, 
and we perform empirical Bayes to estimate kernel hyperparameters and the likelihood noise variance $\sigma^2$ simultaneously with the variational parameters.
The results in \cref{tbl:uci_additional} report the mean and standard deviation of the root mean squared error~(RMSE) over five test-train splits~(90\%~train, 10\%~test per split).
Also presented is the mean negative log probability~(MNLP) of the predictive posterior on the test data in the first split.
The best MNLP value of each row is in boldface whereas the best RMSE value is only in boldface if the difference is statistically significant~(if the means differ by more than three standard deviations).
QSGP-\# denotes a quadruply stochastic Gaussian process model where the first \# columns of $\mat{C}$ have a dense lower triangle~(note that QSGP-0 is a mean-field model with a diagonal $\mat{C}$).
For all QSGP models, we consider
$m=10^5$ basis functions, and we use the control variate discussed in \cref{sec:control_variate}.
We also compare to stochastic variational Gaussian processes~(SVGP) using GPFlow~\citep{hensman_svgp, gpflow} with 512 inducing points whose locations are learned along with model hyperparameters.
All models were trained on a machine with one GeForce GTX 980 Ti GPU where the maximum training time was 2.1 hours per train-test split for the QSGP-100 model trained on the \emph{ctslice} dataset.
Additional experimental details can be found in \cref{sec:additional_regression_details}.

Comparing the MNLP values between the SVGP and QSGP models, we see that the values are comparable across all datasets while the QSGP models performed noticeably better on \emph{keggu} and \emph{ctslice} caused by overconfident predictions made by SVGP.
%MNLP results can often be skewed by a single overconfident prediction, however, we note that the MNLP values of the SVGP model on splits two to five were even higher than the first and are all greater than 43.7 indicating many overconfident predictions on the test set.
Comparing the root-mean squared error~(RMSE) values between the SVGP and QSGP models, we also see that the values are comparable across all datasets with a few exceptions.
The largest two deviances include the \emph{ctslice} dataset where the average RMSE of the SVGP model was nearly twice that of the QSGP models, as well as the \emph{kin40k} dataset where the QSGP models also performed significantly better.

Comparing the mean-field model QSGP-0 with a diagonal $\mat{C}$ to QSGP-100 with a chevron $\mat{C}$ structure, we find that QSGP-100 generally admits lower MNLP values, suggesting a more accurate predictive posterior variance.
On some datasets such as \emph{ctslice}, this effect is quite dramatic.
This effect is expected since the chevron Cholesky parameterization allows important posterior correlations between basis functions to be captured, unlike the mean-field~(diagonal $\mat{C}$) parameterization which allows no posterior correlations to be captured and tends to give over-confident predictions.
%Interestingly, although the number of dense columns considered in $\mat{C}$ is small relative the number of basis functions,~$m=10^5$, the MNLP generally did not decrease substantially from the QSGP-10 to the QSGP-100 model.
%This may indicate that including just a few dense lower triangular columns to $\mat{C}$ can often capture many important posterior correlations while incurring little overhead to the number of variational parameters.

\begin{table*}[t]
\hspace{-7mm}%
\begin{minipage}{1.08\linewidth}
	\def\arraystretch{1.1}
	{\footnotesize
		\input{uci_additional_results.csv}}
	\caption{
        Large UCI regression dataset results.
		Mean and standard deviation of RMS test error over five test-train splits, as well as MNLP of the test set from the first split.
		\vspace{-3mm} \mbox{} % this is to decrease space left after the caption	
	}
	\label{tbl:uci_additional}
\end{minipage}
\end{table*}

\vspace{-2mm}
\paragraph*{Additional Studies}
Further regression studies using an alternative inducing point kernel approximation can be found in \cref{sec:regression_inducing} of the supplement.
Additionally, sparse quadruply stochastic relevance vector machines~(QSRVM) are outlined and studied in \cref{sec:rvms}.

\section{Conclusion}
\label{sec:conclusion}
The new contribution, ``Quadruply stochastic Gaussian processes''~(QSGP), demonstrates how stochastic variational inference can be applied to sparse Gaussian processes to give a per-iteration complexity that is independent of both the number of training points, and the number of basis functions that define the kernel.
The technique therefore enables Gaussian process inference to be performed on huge datasets~(large~$n$) with highly accurate kernel approximations~(since~$m$ can be made large).
A novel variance reduction strategy was also developed to accelerate SGD convergence, and the QSGP approach was demonstrated on 
large regression and classification problems with up to $m=10^7$ basis functions to show the scaling capabilities with respect to both dataset size and model capacity.

\section{Broader Impact}
Since this work is theoretical in nature there are few ethical considerations directly related to this work. 
In terms of broader impact, this work builds on a long line of work that seeks to perform accurate Gaussian process inference for large datasets.
Gaussian processes are exactly the types of models we want to apply to such problems: flexible function approximators, capable of using the information in large datasets to learn intricate structure through interpretable and expressive covariance kernels.
Unfortunately, some of these benefits are lost on modern GP techniques that only scale to large data problems at the expense of reduced model flexibility/capacity.
Conversely, the proposed approach does not require this sacrifice to be made since per-iteration SGD complexity does not depend on model capacity~(or dataset size). 
We hope that this could enable more accurate inference for large data problems which could translate to more refined uncertainty calibration in high risk applications.
%This allows huge capacity models can be trained and manipulated.
%We hope this work inspires further algorithmic extensions for Gaussian processes or large-scale linear algebra.

%\printbibliography
\bibliographystyle{IEEEtranN} \bibliography{references} % arxiv

\newpage
\appendix
\onecolumn
%%%%%%%%%%%%%%%%%%%%%%%%%%%%%%%%%%%%%%%%%%%%%%
%%%%%%%%%%%%%%%%%%%%%%%%%%%%%%%%%%%%%%%%%%%%%%
%%%%%%%%%%%%%%%%%%%%%%%%%%%%%%%%%%%%%%%%%%%%%%
\section{\Cref{thm:mu_estimator} Proof}
\label{sec:mu_estimator_proof}
\begin{proof}
The main idea of the proof is to interpret matrix operations in $\meanLoss$ as expectations, allowing us to write Monte Carlo estimators of those expectations to allow mini-batch sampling over the rows and columns of all matrices.
We begin by re-writing $\meanLoss$ in \cref{eqn:elbo_terms} purely in terms of matrix products as
\begin{align*}
\meanLoss(\mmu)
&=\frac{1}{\sigma^2}\bigg(- 2\mat{y}^T\mat{\Phi}\mat{\mu} + \textsf{sum}\Big((\mat{\Phi}\mat{\mu})^2\Big)\bigg) + \mat{\mu}^T\mat{S}\mat{\mu}
= -\tfrac{2}{\sigma^{2}}\mat{y}^T\mPhi\mat{\mu} + \tfrac{1}{\sigma^{2}}\mat{\mu}^T\mPhi^T\mPhi\mat{\mu} + \mat{\mu}^T\mat{S}\mat{\mu}.
\end{align*}
We then expand the matrix products to give a sum over $m$
\begin{align*}
\meanLoss(\mmu)
&=m\ \sum_{i=1}^m \frac{1}{m} \Big(-\tfrac{2}{\sigma^{2}}\mat{y}^T\mat{\phi}_i\mu_i + \tfrac{1}{\sigma^{2}}\mat{\mu}^T\mPhi^T\mat{\phi}_i{\mu}_i + \mat{\mu}^T\mat{s}_i\mu_i\Big),
\end{align*}
and interpreting this as an expectation, where 
$p_m(i)=m^{-1}$, $i\in\{1,2,\dots,m\}$ is a categorical probability distribution gives
\begin{align*}
\meanLoss(\mmu)
&=m\ \mathbb{E}_{i \sim p_m}\Big(-\tfrac{2}{\sigma^{2}}\mat{y}^T\mat{\phi}_i\mu_i + \tfrac{1}{\sigma^{2}}\mat{\mu}^T\mPhi^T\mat{\phi}_i{\mu}_i + \mat{\mu}^T\mat{s}_i\mu_i\Big).
\end{align*}
Finally, we write a Monte Carlo estimator for this expectation to give the following unbiased estimator
\begin{align*}
\meanLoss(\mmu)
&\approx 
- \tfrac{2m}{\widetilde{m}\sigma^{2}}\mat{y}^T\mPhi_{:,\itilde}\mmu_{\itilde} 
+ \tfrac{m}{\widetilde{m}\sigma^{2}}\mat{\mu}^T\mPhi^T\mPhi_{:,\itilde}\mmu_{\itilde} 
+ \tfrac{m}{\widetilde{m}}\mat{\mu}^T\mat{S}_{:,\itilde}\mmu_{\itilde}.
\end{align*}
The remainder of the proof follows from repeating these successive steps;
(i)~\emph{expanding} matrix operations,
(ii)~interpreting the~(expanded) matrix operations as \emph{expectations}, and
(iii)~writing \emph{Monte Carlo} estimators for these expectations.
Proceeding in this manner completes the proof,
\begingroup
\allowdisplaybreaks
\begin{align*}
\text{Expanding:} \quad
\meanLoss(\mmu)
&\approx 
- \tfrac{2m}{\widetilde{m}\sigma^{2}}\mat{y}^T\mPhi_{:,\itilde}\mmu_{\itilde} 
+ m\ \sum_{j=1}^m \frac{1}{m} \Big(
  \tfrac{m}{\widetilde{m}\sigma^{2}}\mu_j \mat{\phi}_j^T\mPhi_{:,\itilde}\mmu_{\itilde} 
+ \tfrac{m}{\widetilde{m}}\mu_j\mat{s}_{j,\itilde}\mmu_{\itilde}
\Big),
\\
\text{Expectations:} \quad
\meanLoss(\mmu)
&\approx 
- \tfrac{2m}{\widetilde{m}\sigma^{2}}\mat{y}^T\mPhi_{:,\itilde}\mmu_{\itilde} 
+ m\ \mathbb{E}_{j \sim p_m} \Big(
  \tfrac{m}{\widetilde{m}\sigma^{2}}\mu_j \mat{\phi}_j^T\mPhi_{:,\itilde}\mmu_{\itilde} 
+ \tfrac{m}{\widetilde{m}}\mu_j\mat{s}_{j,\itilde}\mmu_{\itilde}
\Big),
\\
\text{Monte Carlo:} \quad
\meanLoss(\mmu)
&\approx 
- \tfrac{2m}{\widetilde{m}\sigma^{2}}\mat{y}^T\mPhi_{:,\itilde}\mmu_{\itilde} 
+ \tfrac{m^2}{\widetilde{m}^2\sigma^{2}}\mmu_{\jtilde}^T \mPhi_{:,\jtilde}^T\mPhi_{:,\itilde}\mmu_{\itilde} 
+ \tfrac{m^2}{\widetilde{m}^2}\mmu_{\jtilde}^T\mat{S}_{\jtilde,\itilde}\mmu_{\itilde},
\\
\text{Expanding:} \quad
\meanLoss(\mmu)
&\approx 
n\ \sum_{\ell=1}^n \frac{1}{n}\Big(
- \tfrac{2m}{\widetilde{m}\sigma^{2}}y_{\ell}\mphi_{\ell,\itilde}\mmu_{\itilde} 
+ \tfrac{m^2}{\widetilde{m}^2\sigma^{2}}\mmu_{\jtilde}^T \mphi_{\ell,\jtilde}^T\mphi_{\ell,\itilde}\mmu_{\itilde}
\Big)
+ \tfrac{m^2}{\widetilde{m}^2}\mmu_{\jtilde}^T\mat{S}_{\jtilde,\itilde}\mmu_{\itilde},
\\
\text{Expectations:} \quad
\meanLoss(\mmu)
&\approx 
n\ \mathbb{E}_{\ell \sim p_n}\Big(
- \tfrac{2m}{\widetilde{m}\sigma^{2}}y_{\ell}\mphi_{\ell,\itilde}\mmu_{\itilde} 
+ \tfrac{m^2}{\widetilde{m}^2\sigma^{2}}\mmu_{\jtilde}^T \mphi_{\ell,\jtilde}^T\mphi_{\ell,\itilde}\mmu_{\itilde}
\Big)
+ \tfrac{m^2}{\widetilde{m}^2}\mmu_{\jtilde}^T\mat{S}_{\jtilde,\itilde}\mmu_{\itilde},
\\
\text{Monte Carlo:} \quad
\meanLoss(\mmu)
&\approx 
- \tfrac{2nm}{\widetilde{n}\widetilde{m}\sigma^{2}}\mat{y}_{\ltilde}^T\mPhi_{\ltilde,\itilde}\mmu_{\itilde} 
+ \tfrac{nm^2}{\widetilde{n}\widetilde{m}^2\sigma^{2}}\mmu_{\jtilde}^T \mPhi_{\ltilde,\jtilde}^T\mPhi_{\ltilde,\itilde}\mmu_{\itilde}
+ \tfrac{m^2}{\widetilde{m}^2}\mmu_{\jtilde}^T\mat{S}_{\jtilde,\itilde}\mmu_{\itilde}.
\end{align*}
\endgroup
\end{proof}

\section{\Cref{thm:cov_estimator} Proof}
\label{sec:cov_estimator_proof}
\begin{proof}
This proof proceeds similarly to that of \cref{thm:mu_estimator} where the main idea is to interpret matrix operations in $\covLoss$ as expectations, allowing us to write Monte Carlo estimators of those expectations to allow mini-batch sampling over the rows and columns of all matrices.
We begin by re-writing $\covLoss$ in \cref{eqn:elbo_terms} purely in terms of matrix products
\begin{align*}
\covLoss(\mSigma)
&=\frac{1}{\sigma^2}\textsf{sum}\Big((\mat{\Phi}\mat{C})^2\Big) + \textsf{tr}\big(\mat{S} \mat{\Sigma}\big) - \log\big|\mSigma\big|
= \sum_{r=1}^m
\tfrac{1}{\sigma^2}\mat{c}_r^T\mPhi^T\mPhi\mat{c}_r
+ \mat{c}_r^T\mat{S}\mat{c}_r
- 2\log c_{rr},
\end{align*}
interpreting the summation as an expectation, we have
\begin{align*}
\covLoss(\mSigma)
&= m\ \mathbb{E}_{r \sim p_m}\Big(
\tfrac{1}{\sigma^2}\mat{c}_r^T\mPhi^T\mPhi\mat{c}_r
+ \mat{c}_r^T\mat{S}\mat{c}_r
- 2\log c_{rr}
\Big),
\end{align*}
where
$p_m(i)=m^{-1}$, $i\in\{1,2,\dots,m\}$ is a categorical probability distribution,
and writing a Monte Carlo estimator for this expectation gives the following unbiased estimator
\begin{align*}
\covLoss(\mSigma)
&\approx
\tfrac{m}{\widetilde{m}} \sum_{r \in \rtilde}
  \tfrac{1}{\sigma^2}\mat{c}_r^T\mPhi^T\mPhi\mat{c}_r
+ \mat{c}_r^T\mat{S}\mat{c}_r
- 2\log c_{rr}.
\end{align*}
We now proceed in a manner identically to the proof of \cref{thm:mu_estimator} where we repeat three successive steps;
(i)~\emph{expanding} matrix operations in $\covLoss$,
(ii)~interpreting these~(expanded) matrix operations as \emph{expectations}, and
(iii)~writing \emph{Monte Carlo} estimators for these expectations.
Iterating,
\begingroup
\allowdisplaybreaks
\begin{align*}
%%%%%%%%%%%%%%%%%%%%%%%%%%%%%%%%%%%%%%%%%
%%%%%%%%%%%%% i loop %%%%%%%%%%%%%%%%%%%%
%%%%%%%%%%%%%%%%%%%%%%%%%%%%%%%%%%%%%%%%%
\text{Expanding:} \quad
\covLoss(\mSigma)
&\approx
\tfrac{m}{\widetilde{m}} \sum_{r \in \rtilde}
m\ \sum_{i=1}^m \tfrac{1}{m}\Big(
  \tfrac{1}{\sigma^2}\mat{c}_{r}^T\mPhi^T\mphi_i{c}_{i,r}
+ \mat{c}_{r}^T\mat{s}_{i}{c}_{i,r}
\Big)
- 2\log c_{rr},
\\
\text{Expectations:} \quad
\covLoss(\mSigma)
&\approx
\tfrac{m}{\widetilde{m}} \sum_{r \in \rtilde}
m\ \mathbb{E}_{i\sim p_m}\Big(
  \tfrac{1}{\sigma^2}\mat{c}_{r}^T\mPhi^T\mphi_i{c}_{i,r}
+ \mat{c}_{r}^T\mat{s}_{i}{c}_{i,r}
\Big)
- 2\log c_{rr},
\\
\text{Monte Carlo:} \quad
\covLoss(\mSigma)
&\approx
\tfrac{m}{\widetilde{m}} \sum_{r \in \rtilde}
  \tfrac{m}{\sigma^2\widetilde{m}}\mat{c}_{r}^T\mPhi^T\mPhi_{:,\itilde}\mat{c}_{\itilde,r}
+ \tfrac{m}{\widetilde{m}}\mat{c}_{r}^T\mat{S}_{:,\itilde}\mat{c}_{\itilde,r}
- 2\log c_{rr},
\\
%%%%%%%%%%%%%%%%%%%%%%%%%%%%%%%%%%%%%%%%%
%%%%%%%%%%%%% j loop %%%%%%%%%%%%%%%%%%%%
%%%%%%%%%%%%%%%%%%%%%%%%%%%%%%%%%%%%%%%%%
\text{Expanding:} \quad
\covLoss(\mSigma)
&\approx
\tfrac{m}{\widetilde{m}} \sum_{r \in \rtilde}
m\ \sum_{j=1}^m \tfrac{1}{m}\Big(
  \tfrac{m}{\sigma^2\widetilde{m}}{c}_{j,r}\mphi_j^T\mPhi_{:,\itilde}\mat{c}_{\itilde,r}
+ \tfrac{m}{\widetilde{m}}{c}_{j,r}\mat{s}_{j,\itilde}\mat{c}_{\itilde,r}
\Big)
- 2\log c_{rr},
\\
\text{Expectations:} \quad
\covLoss(\mSigma)
&\approx
\tfrac{m}{\widetilde{m}} \sum_{r \in \rtilde}
m\ \mathbb{E}_{j\sim p_m}\Big(
  \tfrac{m}{\sigma^2\widetilde{m}}{c}_{j,r}\mphi_j^T\mPhi_{:,\itilde}\mat{c}_{\itilde,r}
+ \tfrac{m}{\widetilde{m}}{c}_{j,r}\mat{s}_{j,\itilde}\mat{c}_{\itilde,r}
\Big)
- 2\log c_{rr},
\\
\text{Monte Carlo:} \quad
\covLoss(\mSigma)
&\approx
\tfrac{m}{\widetilde{m}} \sum_{r \in \rtilde}
  \tfrac{m^2}{\sigma^2\widetilde{m}^2}\mat{c}_{\jtilde,r}^T\mPhi_{:,\jtilde}^T\mPhi_{:,\itilde}\mat{c}_{\itilde,r}
+ \tfrac{m^2}{\widetilde{m}^2}\mat{c}_{\jtilde,r}^T\mat{S}_{\jtilde,\itilde}\mat{c}_{\itilde,r}
- 2\log c_{rr},
\\
%%%%%%%%%%%%%%%%%%%%%%%%%%%%%%%%%%%%%%%%%
%%%%%%%%%%%%% ell loop %%%%%%%%%%%%%%%%%%
%%%%%%%%%%%%%%%%%%%%%%%%%%%%%%%%%%%%%%%%%
\text{Expanding:} \quad
\covLoss(\mSigma)
&\approx
\tfrac{m}{\widetilde{m}} \sum_{r \in \rtilde}
n\ \sum_{\ell=1}^n \tfrac{1}{n}\Big(
  \tfrac{m^2}{\sigma^2\widetilde{m}^2}\mat{c}_{\jtilde,r}^T\mphi_{\ell,\jtilde}^T\mphi_{\ell,\itilde}\mat{c}_{\itilde,r}
\Big)
+ \tfrac{m^2}{\widetilde{m}^2}\mat{c}_{\jtilde,r}^T\mat{S}_{\jtilde,\itilde}\mat{c}_{\itilde,r}
- 2\log c_{rr},
\\
\text{Expectations:} \quad
\covLoss(\mSigma)
&\approx
\tfrac{m}{\widetilde{m}} \sum_{r \in \rtilde}
n\ \mathbb{E}_{\ell\sim p_n}\Big(
  \tfrac{m^2}{\sigma^2\widetilde{m}^2}\mat{c}_{\jtilde,r}^T\mphi_{\ell,\jtilde}^T\mphi_{\ell,\itilde}\mat{c}_{\itilde,r}
\Big)
+ \tfrac{m^2}{\widetilde{m}^2}\mat{c}_{\jtilde,r}^T\mat{S}_{\jtilde,\itilde}\mat{c}_{\itilde,r}
- 2\log c_{rr},
\\
\text{Monte Carlo:} \quad
\covLoss(\mSigma)
&\approx
\tfrac{m}{\widetilde{m}} \sum_{r \in \rtilde}
  \tfrac{nm^2}{\sigma^2\widetilde{n}\widetilde{m}^2}\mat{c}_{\jtilde,r}^T\mPhi_{\ltilde,\jtilde}^T\mPhi_{\ltilde,\itilde}\mat{c}_{\itilde,r}
+ \tfrac{m^2}{\widetilde{m}^2}\mat{c}_{\jtilde,r}^T\mat{S}_{\jtilde,\itilde}\mat{c}_{\itilde,r}
- 2\log c_{rr}.
\end{align*}
\endgroup
\end{proof}

\section{\Cref{thm:crr_closed_form} Proof}
\label{sec:crr_closed_form_proof}
\begin{proof}
The optimal $\mat{c}_r$ minimizes $\covLoss(\mat{C})$.
Therefore writing the terms of $\covLoss(\mat{C})$ from \cref{eqn:elbo_terms} that depend on $\mat{c}_r$ gives the problem statement
\begin{align*}
\underset{\mat{c}_r}{\arg \min}\ \covLoss(\mat{C}) 
= 
\underset{\mat{c}_r}{\arg \min} 
\frac{1}{\sigma^2}\mat{c}_r^T\mPhi^T\mPhi\mat{c}_r
+ \mat{c}_r^T\mat{S}\mat{c}_r
- 2\log c_{rr}.
\end{align*}
Assuming $\mat{c}_r=\mat{e}_r c_{rr}$, the solution of the preceding optimization problem can be obtained by solving the one-dimensional minimization problem
\begin{align*}
\underset{{c}_{rr}}{\min} 
\Big(
\tfrac{1}{\sigma^2}\mat{\phi}_r^T\mat{\phi}_r
+ s_{rr}
\Big) c_{rr}^2
- 2\ \log c_{rr},
\end{align*}
setting the derivative with respect to $c_{rr}$ to zero
%\begin{align*}
%2\Big(
%\tfrac{1}{\sigma^2}\mat{\phi}_r^T\mat{\phi}_r
%+ s_{rr}
%\Big) c_{rr}
%= \frac{2}{c_{rr}},
%\end{align*}
and solving for $c_{rr}$ completes the proof,
\begin{align*}
c_{rr} = \sqrt{\frac{\sigma^2}{\mat{\phi}_r^T\mat{\phi}_r + \sigma^2 s_{rr}}}.
\end{align*}
\end{proof}

\section{\Cref{thm:const_estimator} Proof}
\label{sec:const_estimator_proof}
\begin{proof}
Re-writing $\constLoss$ from \cref{eqn:elbo_terms} assuming that $\mat{S}$ is diagonal
\begin{align*}
\constLoss 
&=
\log \big|2\pi \mat{S}^{{-}1}\big|
- m\ \log(2\pi) - m
+ n\ \log(2\pi \sigma^2) + \tfrac{1}{\sigma^2}\mat{y}^T\mat{y},
\\
&=
m\log(2\pi) - \sum_{i=1}^m \log s_{ii}
- m\ \log(2\pi) - m
+ n\ \log(2\pi \sigma^2) + 
\tfrac{1}{\sigma^2} \sum_{\ell=1}^n y_\ell^2.
\end{align*}
Cancelling terms and interpreting the sums as expectations allows us to write the following unbiased Monte Carlo estimator to complete the proof
\begin{align*}
\constLoss 
&\approx
- \tfrac{m}{\widetilde{m}}\sum_{i \in \itilde} \log s_{ii}
- m
+ n\ \log(2\pi \sigma^2) + 
\tfrac{n}{\sigma^2\widetilde{n}} \mat{y}_{\ltilde}^T\mat{y}_{\ltilde}.
\end{align*}
\end{proof}

\section{\Cref{thm:elbo_lower_bound} Proof}
\label{sec:elbo_lower_bound_proof}
\begin{proof}
Re-writing \cref{eqn:1d_expectation},
\begin{align*}
\mathbb{E}_{q(\mat{w})}\big[ \log \Pr(\mat{y}|\mat{X}, \mat{w}) \big]
&=
\sum_{\ell=1}^n\mathbb{E}_{\mathcal{N}(z|0,1)}\Big[ \log g_{\ell}\big(\mphi_{\ell,:}\mat{\mu} + z\mphi_{\ell,:}\mat{\Sigma}\mphi_{\ell,:}^T\big) \Big],
\end{align*}
and writing the expression inside the site potentials $g_\ell$ as expectations gives
\begin{align*}
\mathbb{E}_{q(\mat{w})}\big[ \log \Pr(\mat{y}|\mat{X}, \mat{w}) \big]
&=
\sum_{\ell=1}^n\mathbb{E}_{\mathcal{N}(z|0,1)}\Big[ 
\log g_{\ell}\Big(
\mathbb{E}_{i,j,r\sim p_m}\big[m\phi_{\ell,i}\mu_{i} + m^3z\phi_{\ell,j}{c}_{j,r}{c}_{i,r}\phi_{\ell,i}\big]
\Big) 
\Big],
\end{align*}
where
$p_a(i)=a^{-1}$, $i\in\{1,2,\dots,a\}$ is a categorical probability distribution.
We can also write the sum over $n$ as an expectation
\begin{align*}
\mathbb{E}_{q(\mat{w})}\big[ \log \Pr(\mat{y}|\mat{X}, \mat{w}) \big]
&=
n\ \mathbb{E}_{\mathcal{N}(z|0,1),\ \ell\sim p_n}\Big[ 
\log g_{\ell}\Big(
\mathbb{E}_{i,j,r\sim p_m}\big[m\phi_{\ell,i}\mu_{i} + m^3z\phi_{\ell,j}{c}_{j,r}{c}_{i,r}\phi_{\ell,i}\big]
\Big) 
\Big].
\end{align*}
Writing the expectations over mini-batches of size 
$\widetilde{m}$, $\widetilde{n}$ for the distributions over $p_m$ and $p_n$, respectively, gives
\begin{align*}
\mathbb{E}_{q(\mat{w})}\big[ \log \Pr(\mat{y}|\mat{X}, \mat{w}) \big]
&=
\tfrac{n}{\widetilde{n}}\ \mathbb{E}_{\mathcal{N}(z|0,1),\ \ltilde\sim p_n^{\widetilde{n}}}\bigg[ 
\sum_{\ell \in \ltilde}
\log g_{\ell}\Big(
\mathbb{E}_{\itilde,\jtilde,\rtilde \sim p_m^{\widetilde{m}}}\big[
\tfrac{m}{\widetilde{m}}\mphi_{\ell,\itilde}\mat{\mu}_{\itilde} + \tfrac{m^3}{\widetilde{m}^3}z\mphi_{\ell,\jtilde}\mat{C}_{\jtilde,\rtilde}\mat{C}_{\itilde,\rtilde}^T\mphi_{\ell,\itilde}^T
\big]
\Big) 
\bigg],
\end{align*}
where 
$p_a^{b}$ is an $b$-dimensional distribution with each dimension \iid\ according to $p_a$.
Finally, assuming that $g_\ell$ is log-concave, we apply Jensen's inequality to complete the proof
\begin{align*}
\mathbb{E}_{q(\mat{w})}\big[ \log \Pr(\mat{y}|\mat{X}, \mat{w}) \big]
&\geq
\tfrac{n}{\widetilde{n}}\ \mathbb{E}_{\mathcal{N}(z|0,1),\ \ltilde\sim p_n^{\widetilde{n}},\ \itilde,\jtilde,\rtilde \sim p_m^{\widetilde{m}}}\bigg[ 
\sum_{\ell \in \ltilde}\log g_{\ell}\Big(
\tfrac{m}{\widetilde{m}}\mphi_{\ell,\itilde}\mat{\mu}_{\itilde} + \tfrac{m^3}{\widetilde{m}^3}z\mphi_{\ell,\jtilde}\mat{C}_{\jtilde,\rtilde}\mat{C}_{\itilde,\rtilde}^T\mphi_{\ell,\itilde}^T
\Big) \bigg].
\end{align*}
\end{proof}

\section{Control Variates: Additional Details}
\label{sec:control_variate_supplement}
\subsection{Sparse Gradient Scaling}
The gradient of the expectation of the control variate outlined in \cref{thm:cv_updates} can be directly differentiated to give the exact (dense)~gradient, however, we would like sparse unbiased gradients with respect to the parameters $\mmu_{\itilde \cup \jtilde}$ that are updated at an SGD iteration.
Simply taking the relevant terms from the dense gradient and ignoring the others would introduce bias but we can scale this sparsified gradient appropriately to give unbiased gradients.
Because the gradient is sparse with just $|\itilde \cup \jtilde| \leq 2\widebar{m}$ non-zeros, this sparsified gradient needs to be scaled by $\tfrac{m}{|\itilde \cup \jtilde|}$.
This can be implemented in automatic differentiation software to give the correct zeroth derivative and an unbiased estimate of the first derivative as follows
\begin{align*}
\tfrac{m}{|\itilde \cup \jtilde|}\tfrac{n}{\sigma^2\widebar{n}}
\mat{a}^{(t)\,T}\mat{a}^{(t)}
+ 
\textsf{stop\_gradient}\bigg(\Big(1-\tfrac{m}{|\itilde \cup \jtilde|}\Big)\tfrac{n}{\sigma^2\widebar{n}}
\mat{a}^{(t)\,T}\mat{a}^{(t)}\bigg),
\end{align*}
where the argument of $\textsf{stop\_gradient}$ does not contribute to the gradient during its computation.

\subsection{Additional Control Variates}
In this section we outline additional control variates that can be used for the various terms in \cref{thm:mu_estimator,thm:cov_estimator}.
Note that all of these control variates can be used to reduce gradient variance without introducing bias and without introducing dependence on $n$ or $m$ in the cost of an SGD iteration.

We begin by presenting a control variate for the term 
$\tfrac{m^2}{\widetilde{m}^2}\mmu_{\jtilde}^T \mat{S}_{\jtilde, \itilde} \mmu_{\itilde}$ in \cref{thm:mu_estimator}.
The goal is ultimately to find a low-rank approximation to $\mat{S}$ which depends on the form of the matrix.
As a useful example, the popular class of inducing point kernel approximations give a prior precision matrix $\mat{S}$ that is a kernel covariance~(or Gram) matrix evaluated on the set of inducing points.
For notational convenience, consider the set of inducing points to be identical to the training set points in which case $n=m$ and $\mat{S}=\mat{K}$.
In the following expression, we present the negative control variate~(first term) plus the control variate's expectation~(term two) that can be simply added to the $\meanLoss$ estimator in \cref{thm:mu_estimator}
\begin{align}
\nonumber
-\tfrac{m^2}{\widetilde{m}^2} \mmu_{\itilde}^T \mat{K}(\text{X}_{\itilde},\text{U})\; \K(\text{U},\text{U})^{-1}\; \mat{K}(\text{U},\text{X}_{\jtilde})\; \mmu_{\jtilde}
+ \mmu^T\mat{K}(\text{X},\text{U})\; \K(\text{U},\text{U})^{-1}\; \mat{K}(\text{U},\text{X})\; \mmu,
\end{align}
where
$\text{U}$ is a set of $\widebar{n}$ support points in the $d$-dimensional input space, and
the notation $\mat{K}(\text{X},\text{U}) \inR{m \times \widebar{n}}$ denotes the kernel cross-covariance matrix between the sets X and U such that the $ij$th element of this matrix is $k(\mat{x}_i,\mat{u}_j)$.
Clearly this expression has an expectation of zero and so simply adding it to the stochastic estimator in \cref{thm:mu_estimator} does not introduce any bias.
This control variate has the capacity to reduce variance of the $\meanLoss$ estimator provided there is correlation between the elements of $\mat{S}=\K$ and $\mat{K}(\text{X},\text{U})\; \K(\text{U},\text{U})^{-1}\mat{K}(\text{U},\text{X}) \inR{m\times m}$.
This matrix is a \nystrom\ approximation of $\K$, a low rank kernel approximation that is widely used in the sparse GP community for kernel approximation, and preconditioning~\citep{williams_nystrom, snelson_fitc, cutajar_preconditioning, evans_gpgrief, peng_eigengp}.
The size of the set $\text{U}$ should be much smaller than $n$ and is often selected randomly from the training set~$\mat{X}$, however, a wealth of other selection strategies have been developed~\citep{smola_greedy_nystrom, drineas_nystrom, zhang_nystrom, belabbas_nystrom, kumar_nystrom_sampling, wang_nystrom, gittens_nystrom, li_nystrom, musco_leverage_nystrom}.
Empirically, it has been found that this matrix approximation is quite accurate~\citep{evans_gpgrief, musco_leverage_nystrom}, and we find that this control variate dramatically reduces variance in practice.

Unfortunately, naive evaluation of the expectation of this control variate~(the second term in the previous equation) requires \order{n}=\order{m} computations at each iteration.
However, when sparse updates are performed at each SGD iteration, an approach similar to that outlined in \cref{thm:cv_updates} can be used to decrease the per-iteration computations to \order{1} when using this control variate.
This control variate was used in the empirical studies of \cref{tbl:uci_inducing}.

We additionally note that a nearly identical control variate to the one outlined in the previous equation can be derived for the terms 
$
\tfrac{m^2}{\widetilde{m}^2}
\mat{c}_{\jtilde,r}^T \mat{S}_{\jtilde, \itilde} \mat{c}_{\itilde,r}
$
in \cref{thm:cov_estimator} by simply replacing $\mmu$ with $\mat{c}_{\itilde,r}$.

We can also easily develop a control variate for the term
$-\tfrac{2nm}{\sigma^2\widetilde{n}\widetilde{m}}\mat{y}_{\ltilde}^T\mPhi_{\ltilde, \itilde}\mat{\mu}_{\itilde}$ from \cref{thm:mu_estimator}
as follows
\begin{align*}
\tfrac{2m}{\sigma^2\widetilde{m}}\mat{b}_{\itilde}^T\mat{\mu}_{\itilde} - \tfrac{2}{\sigma^2}\mat{b}^T\mat{\mu},
\end{align*}
where the negative control variate is the first term, 
the expectation of the control variate is the second term, and
$\mat{b} = \mPhi^T\mat{y} \inR{m}$ is a vector that is precomputed before SGD iterations begin.
Similarly to \cref{thm:cv_updates} it is possible to make the cost of SGD iterations independent of $n$ and $m$ with this control variate, however, the precomputation of $\mat{b}$ costs \order{nm} which could be prohibitive.
To reduce this cost the vector $\mat{b}$ could be approximated.
We did not explore the use of this control variate in our experiments.

\subsection{Control Variates with Empirical Bayes}
If empirical Bayes is being performed, then $\mPhi$ is likely to change during optimization and so the basis functions used in \cref{eqn:control_variate_phiTphi} and in \cref{thm:cv_updates} should be fixed to the basis functions at initialization.
This same approach can be extended to the additional control variates discussed above.

\section{Predictive Posterior Augmentation}
\label{sec:augmentation}
Although the QSGP inference procedure allows $m$ to be very large, it is still finite and this degeneracy can negatively impact the quality of the predictive posterior variance.
In this section we discuss \emph{augmentation} to address this where a radial basis function is added to the QSGP model at each test point when evaluating the predictive variance~\citep{quinonero_sparse_gpm}.

The results of this section are not presented in the main body of the paper since the proposed augmentation requires a specific choice of kernel approximation, i.e. a specific choice of $\mPhi$ and $\mat{S}$.
Namely, the augmentation results assume that an inducing point approximation is made such that
$\phi_{i,j} = k(\mat{x}_i, \mat{z}_j)$ and
$s_{i,j} = k(\mat{z}_i, \mat{z}_j)$, where $\mat{z}_i \inR{d}$ for $i=1,\dots,m$ are the inducing point locations.
For ease of notation in our presentation, we assume that $n=m$ inducing points are centred on all training points such that $\mat{x}_i = \mat{z}_i$ for all $i=1,\dots,n$.
We will see that this is a sensible choice since augmentation requires the storage of the training dataset at test time, and
predictive posterior variance computations using augmentation require \order{n} time anyway, regardless of $m$.
This approach can be easily generalized for arbitrary inducing point locations.

The following result demonstrates how augmentation can be implemented affordably.
\begin{proposition} \label{thm:predictive_posterior_var_augmented}
The proposed model with augmentation gives the following predictive variance
\begin{equation*}
\mathbb{V}[y_*] = \mat{k}(\mat{x}_*)^T \mat{\Sigma} \mat{k}(\mat{x}_*)
+ \frac{\sigma^2 k(\mat{x}_*,\mat{x}_*)^2}{\mat{k}(\mat{x}_*)^T\mat{k}(\mat{x}_*) + \sigma^2k(\mat{x}_*,\mat{x}_*)},
\end{equation*}
where it is assumed that there is no posterior correlation between the augmented basis function and the basis functions in $\K$.
\end{proposition}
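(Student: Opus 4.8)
The plan is to realize augmentation as the addition of a single basis function centered at the test point, and to show that under the stated no-correlation assumption the extra predictive-variance term is exactly the posterior variance of the corresponding augmented weight, which is already available in closed form from \cref{thm:crr_closed_form}.

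First I would set up the augmented model. Under the inducing-point convention $\phi_{i,j}=k(\mat{x}_i,\mat{z}_j)$ and $s_{i,j}=k(\mat{z}_i,\mat{z}_j)$, augmentation adjoins the basis function $\phi_*(\mat{x})=k(\mat{x},\mat{x}_*)$ with a new weight $w_*$, while the original weights retain their trained variational posterior $\mathcal{N}(\mmu,\mSigma)$. I would identify the two quantities $w_*$ contributes: its design column, i.e. the evaluations of $\phi_*$ at the training inputs, is precisely the cross-covariance vector $\mat{k}(\mat{x}_*)$, and its diagonal prior precision entry in the augmented Gram matrix is $s_{**}=k(\mat{x}_*,\mat{x}_*)$. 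Evaluated at the test point, the augmented latent function is $f(\mat{x}_*)=\mat{k}(\mat{x}_*)^T\mat{w}+k(\mat{x}_*,\mat{x}_*)\,w_*$, since $\phi_*(\mat{x}_*)=k(\mat{x}_*,\mat{x}_*)$ and the original features evaluate to $\mat{k}(\mat{x}_*)$.

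Next I would invoke the no-correlation assumption to decouple $w_*$ from $\mat{w}$ in the posterior. This renders the covariance objective $\covLoss$ separable in $w_*$, so its optimal variational variance follows from the diagonal (mean-field) computation of \cref{thm:crr_closed_form} with $\mat{\phi}_r$ replaced by $\mat{k}(\mat{x}_*)$ and $s_{rr}$ by $k(\mat{x}_*,\mat{x}_*)$, giving
\begin{equation*}
\mathbb{V}[w_*]=\frac{\sigma^2}{\mat{k}(\mat{x}_*)^T\mat{k}(\mat{x}_*)+\sigma^2 k(\mat{x}_*,\mat{x}_*)}.
\end{equation*}
Because $w_*$ and $\mat{w}$ are posterior-independent, the variance of $f(\mat{x}_*)$ splits additively as $\mathbb{V}[f(\mat{x}_*)]=\mat{k}(\mat{x}_*)^T\mSigma\,\mat{k}(\mat{x}_*)+k(\mat{x}_*,\mat{x}_*)^2\,\mathbb{V}[w_*]$, and substituting the expression above reproduces the claimed formula. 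I would note that $\mathbb{V}[y_*]$ here denotes the latent predictive variance (no additive $\sigma^2$), consistent with the predictive expression in the background section.

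The main obstacle is conceptual rather than computational: justifying that adjoining the test-centered basis function, together with the no-correlation assumption, reduces exactly to the single-weight diagonal-covariance problem solved in \cref{thm:crr_closed_form}. In particular, I would argue that although the augmented prior precision couples $w_*$ to $\mat{w}$ through an off-diagonal block $\mat{k}(\mat{x}_*)$, the diagonal variational parameterization $\mat{c}_*=\mat{e}_* c_{**}$ only ever sees the diagonal entry $s_{**}=k(\mat{x}_*,\mat{x}_*)$, so the coupling is irrelevant to $\mathbb{V}[w_*]$. Once that reduction is established, the remaining steps — matching the design column to $\mat{k}(\mat{x}_*)$ and adding the two independent variance contributions — are immediate.
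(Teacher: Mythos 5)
Your proposal is correct and follows essentially the same route as the paper: decouple the augmented weight via the no-correlation assumption, observe that its optimal variational variance is given by the single-column diagonal computation (the paper re-derives the one-dimensional minimization inline, which is exactly the calculation behind \cref{thm:crr_closed_form} that you cite), and add the two independent variance contributions with the augmented feature evaluating to $k(\mat{x}_*,\mat{x}_*)$ at the test point. Your remark that the off-diagonal prior-precision block $\mat{k}(\mat{x}_*)$ never enters because $\textsf{tr}(\mat{S}\mSigma')$ only sees the diagonal blocks under the block-diagonal posterior is the same observation the paper makes implicitly when it drops the cross terms from $\covLoss'$.
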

\begin{proof}
Let $\mat{X}_*$ denote the set of $n$ training points with an additional test point $\mat{x}_*=\mat{x}_{n+1}$ appended.
We will use $\mat{X}_*$ as inducing points at test time so that
\begin{align*}
\mPhi= \Big[\mat{K}\quad \mat{k}(\mat{x}_*)\Big],
\quad \text{and} \quad
\mat{S} = \left[\begin{array}{cc}
\mat{K} & \mat{k}(\mat{x}_*)\\
\mat{k}(\mat{x}_*)^T & k(\mat{x}_*,\mat{x}_*)
\end{array}\right].
\end{align*}
We will also denote $\mSigma',\mat{C}'\inR{n{+}1\times n{+}1}$ as the posterior covariance, and its respective Cholesky parameterization.
We can modify $\covLoss$ from \cref{eqn:elbo_terms} to give $\covLoss'$ as for this augmented model as follows
\begin{align*}
\covLoss' = \frac{1}{\sigma^2}\textsf{sum}\Big((\mat{\Phi}\mat{C}')^2\Big) - \log\big|\mSigma'\big| + \textsf{tr}\big(\mat{S} \mat{\Sigma}'\big)
= \sum_{r=1}^{n+1} \frac{1}{\sigma^2} \mat{c}_{r}^{\prime T} \big(\mPhi^T\mPhi + \sigma^2\mat{S}\big)\mat{c}_{r}^{\prime}
-2\log c'_{rr}.
\end{align*}
%where
%it is again evident that the ELBO is separable in the columns of $\mat{C}'$.
The assumption of no posterior correlation between the augmented basis function and the basis functions in the columns of $\K$ means that
$c'_{n+1,i} = 0$ for $i=1,2,\dots,n$, giving
\begin{align*}
\covLoss' = \covLoss\big(\mat{C}'_{1:n,1:n}\big)-2\log c'_{{n{+}1},{n{+}1}}
+\frac{1}{\sigma^2} \Big(\mat{k}(\mat{x}_*)^T\mat{k}(\mat{x}_*) + \sigma^2{k}_{x_*,x_*}\Big){c}^{\prime 2}_{{n{+}1},{n{+}1}},
\end{align*}
and
consequently the first $n$ rows and columns of $\mat{C}'$ should be selected to minimize $\covLoss$ from \cref{eqn:elbo_terms}~(and also \cref{thm:cov_estimator}) without any influence from the test point $\mat{x}_*$.
Therefore the augmented and non-augmented predictive posterior variance differ only by the contribution of $c'_{n{+}1,n{+}1}$.
Setting the derivative of $\covLoss'$ with respect to $c'_{{n{+}1},{n{+}1}}$ to zero and solving gives
\begin{align*}
\frac{2}{\sigma^2} \Big(\mat{k}(\mat{x}_*)^T\mat{k}(\mat{x}_*) + \sigma^2{k}_{x_*,x_*}\Big){c}'_{{n{+}1},{n{+}1}}
&= 
\frac{2}{c'_{{n{+}1},{n{+}1}}},
\\
c'_{{n{+}1},{n{+}1}} 
&= 
\sqrt{\frac{\sigma^2}{\mat{k}(\mat{x}_*)^T\mat{k}(\mat{x}_*) + \sigma^2{k}_{x_*,x_*}}},
\end{align*}
and using the square of this value as the posterior variance for the augmented basis function completes the proof.
\end{proof}
We make the following additional observations about the use of this augmentation strategy.
\begin{remark}
Augmentation cannot decrease the predictive posterior variance.
\end{remark}
\begin{remark} \label{thm:extrapolate_to_prior}
Augmentation will cause the predictive posterior variance to revert to the prior variance far from the training data~(where $\mat{k}(\mat{x}_*)$ approaches $\mat{0}$).
\end{remark}
These follow from the facts that
the second term in \cref{thm:predictive_posterior_var_augmented} must be non-negative, and
that the value of the second term approaches the prior variance $k(\mat{x}_*,\mat{x}_*)$ as  $\mat{k}(\mat{x}_*)$ approaches $\mat{0}$, respectively.

Further, the use of augmentation requires negligible additional work at test time since $\mat{k}(\mat{x}_*)$ is already required for both the predictive mean, and the non-augmented predictive variance.
The following study demonstrates how augmentation can improve the quality of the predictive variance far from the training data.

\subsubsection*{Augmentation Visualization}
\begin{figure}
\centering
\includegraphics[width=0.5\textwidth]{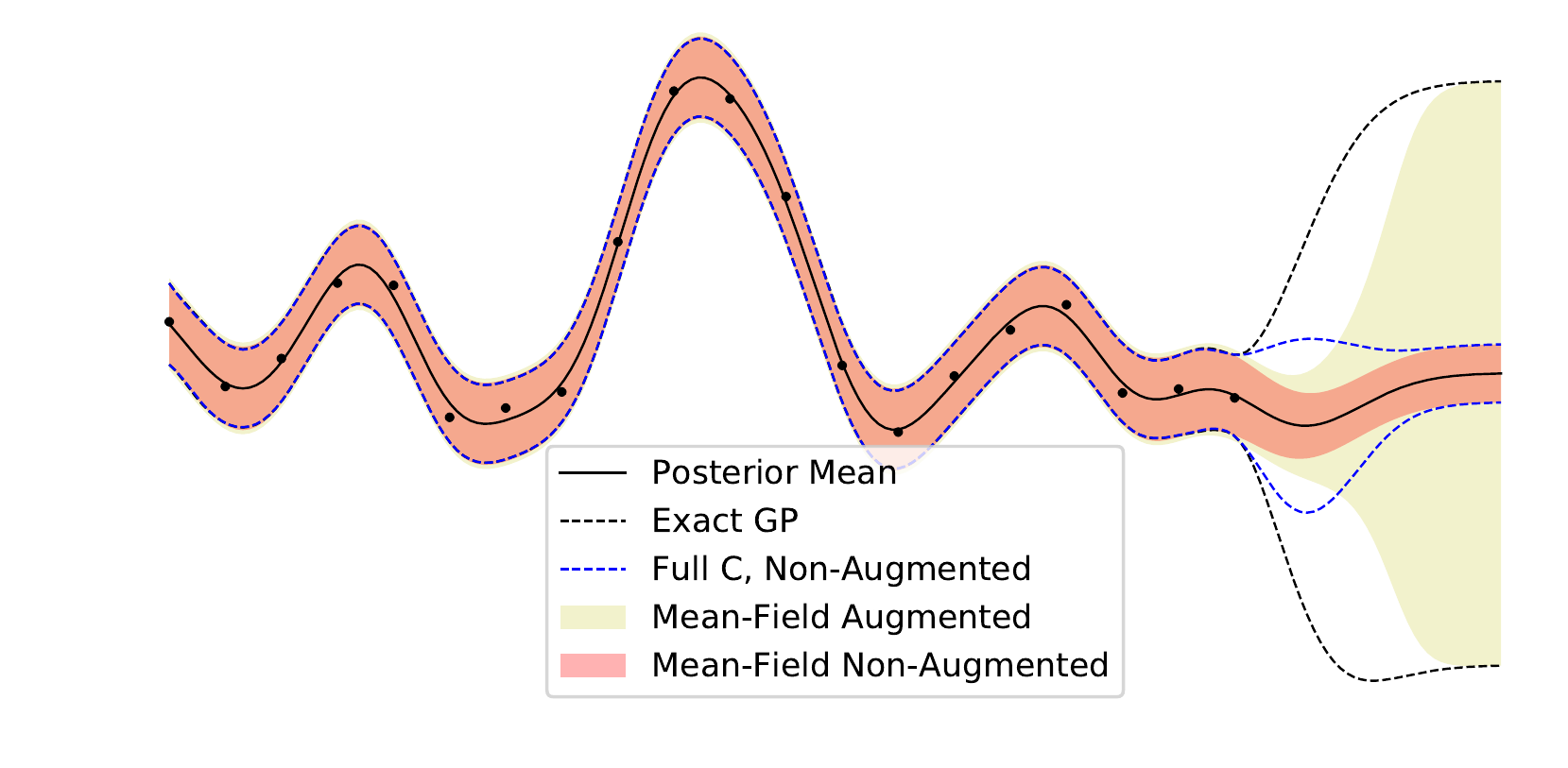}
\caption{Comparison of predictive variance between an exact GP, and a QSGP model with $\mPhi=\mat{S}=\mat{K}$ and both with and without augmentation.
}
\label{fig:augmentation_study}
\end{figure}
\Cref{fig:augmentation_study} plots the predictive posterior mean and standard deviation of an exact GP, as well as a QSGP model with $\mPhi=\mat{S}=\mat{K}$ and both with and without augmentation.
The dataset was generated using the sinc function with $n=20$ points, corrupted with independent Gaussian noise.
Note that all models have an identical posterior mean, and we use a squared exponential kernel.
Around the training data the augmented and non-augmented models give a nearly identical predictive variance that agrees well with the variance of the exact GP.
As the non-augmented model extrapolates, its predictive variance shrinks which is a highly undesirable trait of the degeneracy of the approximate GP.
However, the predictive variance of the augmented model grows as it extrapolates, and while it does not grow as fast as the exact GP, it similarly returns to the prior variance, as expected from \cref{thm:extrapolate_to_prior}.
\Cref{fig:augmentation_study} also shows the predictive variance where a dense lower triangular (or ``full'') $\mat{C}$ is considered.
Note that due to the conjugacy of the Gaussian prior, this model gives exact inference and evidently the results are better than the models constructed with a mean-field assumption where the training data ends.
However, far from the training data, the posterior variance shrinks back to that of the mean-field model.
With augmentation, the full $\mat{C}$ model would be the closest to the exact GP, however, this model is not practical for large $m$.
A mean-field or chevron Cholesky structure with augmentation could provide a compromise for large models.

\section{Site Projection Notes}
\label{sec:site_projection_notes}
We can re-write \cref{eqn:1d_expectation} as
\begin{align*}
\mathbb{E}_{q(\mat{w})}\big[ \log \Pr(\mat{y}|\mat{X}, \mat{w}) \big]
=
\sum_{\ell=1}^n\mathbb{E}_{\mathcal{N}(z|0,1)}\Big[ \log g_{\ell}\big(\mphi_{\ell,:}\mmu + z\:\mphi_{\ell,:}\mat{\Sigma}\mphi_{\ell,:}^T\big) \Big],
\end{align*}
and in this section we will outline what the form of $\log g_\ell$ is for various likelihoods.

\subsection{Logistic Likelihood}
Here we first consider Bayesian logistic regression and model the class conditional distribution using
\begin{align*}
\Pr(y=1|\mat{w}, \mat{x}) = \textsf{sig}(\mat{w}^T\mat{\phi}(\mat{x}))
\end{align*}
where $\textsf{sig}(x) = \tfrac{1}{1+\exp(-x)}$ is the sigmoid function, and 
we consider $\mat{y} \in \{-1,1\}^n$ to be the binary training labels.
Using the symmetry property of the logistic sigmoid 
$\Pr(y=-1|\mat{w}, \mat{x}) = 1 - \Pr(y=1|\mat{w}, \mat{x}) = \textsf{sig}(-\mat{w}^T\mat{\phi}(\mat{x}))$ and
assuming the data are \iid,
we can write the log-likelihood of the training set as
\begin{align*}
 \log\Pr(\mat{y}|\mat{X}, \mat{w}) = \sum_{\ell=1}^n \log\Big(\textsf{sig}(y_\ell\mat{\phi}_{\ell,:}\mat{w})\Big),
\end{align*}
where it is immediately clear from \cref{eqn:site_projection} that
$g_\ell(x) = \textsf{sig}(y_\ell x)$.
It is therefore evident that $\log g_\ell(x) = -\log\big(1+\exp(-y_\ell x)\big)$ which is a vertically flipped~(and potentially horizontally flipped) softplus function, a function is commonly used as a continuous relaxation of a rectified linear unit~(ReLU).
Additionally, it is clear by inspection that $-\log\big(1+\exp(-y_\ell x)\big)$ is a concave function in $x$ such that the result in \cref{thm:elbo_lower_bound} holds for this likelihood.

\subsection{Laplace Likelihood}
We now consider Bayesian regression using a Laplace likelihood and assume the data is \iid\ to give the training set log-likelihood
\begin{align*}
\log\Pr(\mat{y}|\mat{w}, \mat{X}) 
= \sum_{\ell=1}^n \log\;\mathcal{L}(y_{\ell}|\phi_{\ell,:}\mat{w}, b)
= \sum_{\ell=1}^n -\log(2 b) - \frac{1}{b}|y_{\ell}-\phi_{\ell,:}\mat{w}|,
\end{align*}
where
$b > 0$ is the scale parameter.
It is immediately clear from \cref{eqn:site_projection} that
$\log g_\ell(x) = -\log(2 b) - \frac{1}{b}|y_{\ell}-x|$ which a shifted absolute value function.
Additionally, it is clear by inspection that $\log g_\ell(x)$ is a concave function in $x$ such that the result in \cref{thm:elbo_lower_bound} holds for this likelihood.

The expectation over $z$ in \cref{eqn:1d_expectation} can be computed analytically for this site projection, please refer to \citep{challis_gaussian_vi} for details.

\subsection{Gaussian Likelihood}
We now consider Bayesian regression using a Gaussian likelihood and assume the data is distributed \iid\ to give the training set log-likelihood
\begin{align*}
\log\Pr(\mat{y}|\mat{w}, \mat{X}) 
= \sum_{\ell=1}^n \log\;\mathcal{N}(y_{\ell}|\phi_{\ell,:}\mat{w}, \sigma^2)
= \sum_{\ell=1}^n -\frac{1}{2}\log(2 \pi \sigma^2) - \frac{1}{2\sigma^2}(y_{\ell}-\phi_{\ell,:}\mat{w})^2,
\end{align*}
where it is immediately clear from \cref{eqn:site_projection} that
$\log g_\ell(x) = -\tfrac{1}{2}\log(2 \pi \sigma^2) - \tfrac{1}{2\sigma^2}(y_{\ell}-x)^2$ which a quadratic.
Additionally, it is clear by inspection that $\log g_\ell(x)$ is a concave function in $x$ such that the result in \cref{thm:elbo_lower_bound} holds for this likelihood.

The expectation over $z$ in \cref{eqn:1d_expectation} can be computed analytically for this site projection, please refer to \citep{challis_gaussian_vi} for details.

\section{Additional Experiments}
\label{sec:additional_experiments}
This section contains additional numerical studies to those presented in \cref{sec:studies} as well as additional experimental setup details.

\subsection{Regression Studies: Additional Details}
\label{sec:additional_regression_details}

This section contains additional experimental setup details for the regression studies of \cref{sec:studies} whose results are displayed in \cref{tbl:uci_additional}.

For all QSGP models in \cref{tbl:uci_additional}, we consider
$m=10^5$ basis functions and
mini-batch sizes of $\widetilde{m}=10000$ and $\widetilde{n}=500$.
We used the control variate outlined in \cref{thm:cv_updates} with a control variate rank of $\widebar{n}=500$ for both the
$
\tfrac{nm^2}{\sigma^2\widetilde{n}\widetilde{m}^2}
\mmu_{\jtilde}^T
\mPhi_{\ltilde, \jtilde}^T
\mPhi_{\ltilde, \itilde}
\mmu_{\itilde}
$
term in \cref{thm:mu_estimator}, and the
$
\tfrac{nm^2}{\sigma^2 \widetilde{n}\widetilde{m}^2}
\mat{c}_{\jtilde,r}^T
\mPhi_{\ltilde, \jtilde}^T
\mPhi_{\ltilde, \itilde}
\mat{c}_{\itilde,r}
$
terms in \cref{thm:cov_estimator} for each dense lower triangular column in $\mat{C}$.
Therefore, the QSGP-100 model used control variates for 101 terms in its ELBO estimator, for example.
For the inference procedure we also used an AdaGrad optimizer~\citep{adagrad} for the variational parameters with an initial learning rate of $0.1$,
an Adam optimizer~\citep{adam_optimizer} for the hyperparameters with an initial learning rate of $10^{-5}$, and
we decay both these learning rates exponentially over iterations.
We run these optimization procedures for $10^5$ iterations total, and following \citet{hensman_svgp} we find it helpful to freeze the hyperparameters for the first $10^4$ iterations until the variational parameters find a tighter ELBO.
We also compare to stochastic variational Gaussian processes~(SVGP) using GPFlow~\citep{hensman_svgp, gpflow} with 512 inducing points whose locations are learned along with model hyperparameters.
Note that the SVGP model is identical to the model presented in \cref{tbl:uci} except here the model hyperparameters are learned rather than being fixed to initial values. 
For all models, $\sigma^2$ and all kernel hyperparameters are initialized to the same values found by performing empirical Bayes with an exact GP constructed on $1000$ points randomly selected from the dataset.

\subsection{Regression Studies with Inducing Point Kernel Approximations}
\label{sec:regression_inducing}
\begin{table*}[t]
	\centering
	\def\arraystretch{1.1}
	{\footnotesize
		%\hspace{-1cm}%
		\input{uci_results.csv}}
	\caption{
        Large UCI regression dataset results.
		Mean and standard deviation of RMS test error and average training time over five test-train splits, as well as MNLP of the test set from the first split.
        These results differ from those of \cref{tbl:uci_additional} in the kernel approximation used for QSGP in addition to the fact that empirical Bayes is not performed.
		%\\ \vspace{-6mm} \mbox{} % this is to decrease space left after the caption	
	}
	\label{tbl:uci}
    \label{tbl:uci_inducing}
\end{table*}

This section considers additional regression studies with an inducing point kernel approximation rather than the random Fourier feature approximation used in the regression studies of \cref{sec:studies}.

We again consider inference on large regression datasets from the UCI repository, however,
for this study, we choose to use an inducing point approximation with $n=m$ inducing points centred on each training point.
The inducing point kernel approximation results in $\mPhi = \mat{S} = \K$, the exact kernel covariance matrix between training points.
Clearly $\mat{S}=\K$ will be dense in general, therefore we do not consider empirical Bayes in the studies but keep the kernel hyperparameters fixed.
This is a natural kernel approximation choice in many ways since 
it can be shown that the approximation recovers the exact GP prior~(i.e. $\mat{y}\sim \mathcal{N}(\mat{0}, \K +\sigma^2\mat{I})$), and
the learning procedure recovers the exact GP predictive posterior mean.
While the exact GP predictive posterior variance is not recovered, we use an augmentation strategy that incurs negligible additional cost to improve the quality of the predictive variance.
This augmentation procedure is outlined in \cref{thm:predictive_posterior_var_augmented}~(of the supplementary material).

For the regression studies here, the proposed quadruply stochastic Gaussian process~(QSGP) model approximates a squared-exponential kernel with automatic relevance determination and we compare our test errors to those reported by \citet{yang_a_la_carte} on the same train-test splits where the same kernel type was approximated using Fastfood expansions.
We also compare to stochastic variational Gaussian processes~(SVGP) using GPFlow~\citep{hensman_svgp, gpflow} with 512 inducing points whose locations are learned.
For the QSGP model, a mean-field (diagonal) covariance parameterization is assumed and therefore following the notations of \cref{sec:studies}, we denote this model as QSGP-0.
Mini-batch sizes of $\widetilde{n}=\widetilde{m}=3000$ and a rank $\widebar{n}=200$ control variate was used for all datasets except for 
\emph{ctslice} which used $\widetilde{n}=\widetilde{m}=1000$.
%\emph{electric} which used $\widetilde{n}=\widetilde{m}=4000$ and a rank $\widebar{n}=1000$ control variate.
Optimization was performed with AdaGrad~\citep{adagrad}.
For both QSGP and SVGP, $\sigma^2$ and all kernel hyperparameters were initialized by performing empirical Bayes with an exact GP constructed on $1000$ points randomly selected from the dataset.

Results are presented in \cref{tbl:uci} where we report the mean and standard deviation of the root mean squared error~(RMSE) over five test-train splits~(90\%~train, 10\%~test per split).
Also presented is the mean negative log probability~(MNLP) of the predictive posterior on the test data in the first split, and
mean training time per split on a machine with one GeForce GTX 980 Ti graphics card.
The best MNLP value of each row is in boldface whereas the best RMSE value is only in boldface if the difference is statistically significant~(if the means differ by more than three standard deviations).
The RMSE results in \cref{tbl:uci} demonstrate that QSGP performs well on these large datasets which is not surprising considering that the model has the capacity to recover the exact GP posterior mean.
Interestingly, QSGP performs nearly identically to the other sparse GP models for the \emph{kegg}, \emph{keggu}, and \emph{song} datasets.
This could indicate that the exact GP mean can be approximated well by lower capacity models for these problems. 

Considering the MNLP results, QSGP performed comparably to the SVGP model while performing noticeably better on \emph{keggu}.
While a mean-field assumption was made for these studies, we explore going beyond mean-field through the use of the chevron Cholesky structure in the regression  results of \cref{sec:studies}.

%To demonstrate the ability to scale, we constructed a model on exactly two-million training points randomly selected from the \emph{electric} dataset, taking just 10~hours to train, and achieving a RMSE of 0.08 on the remaining test points.
%Noting that the constructed model can recover the \emph{exact} GP posterior mean, the advantages of the proposed stochastic training techniques are evident since construction of the model using traditional non-iterative methods would have required
%$36000$Gb of memory % (2000000)^2*8/1e9 = 32000 Gb
%for the matrix factorization of $\K$. 

\subsection{Relevance Vector Machines}
\label{sec:rvms}
\begin{figure}
\centering
\includegraphics[width=0.5\linewidth]{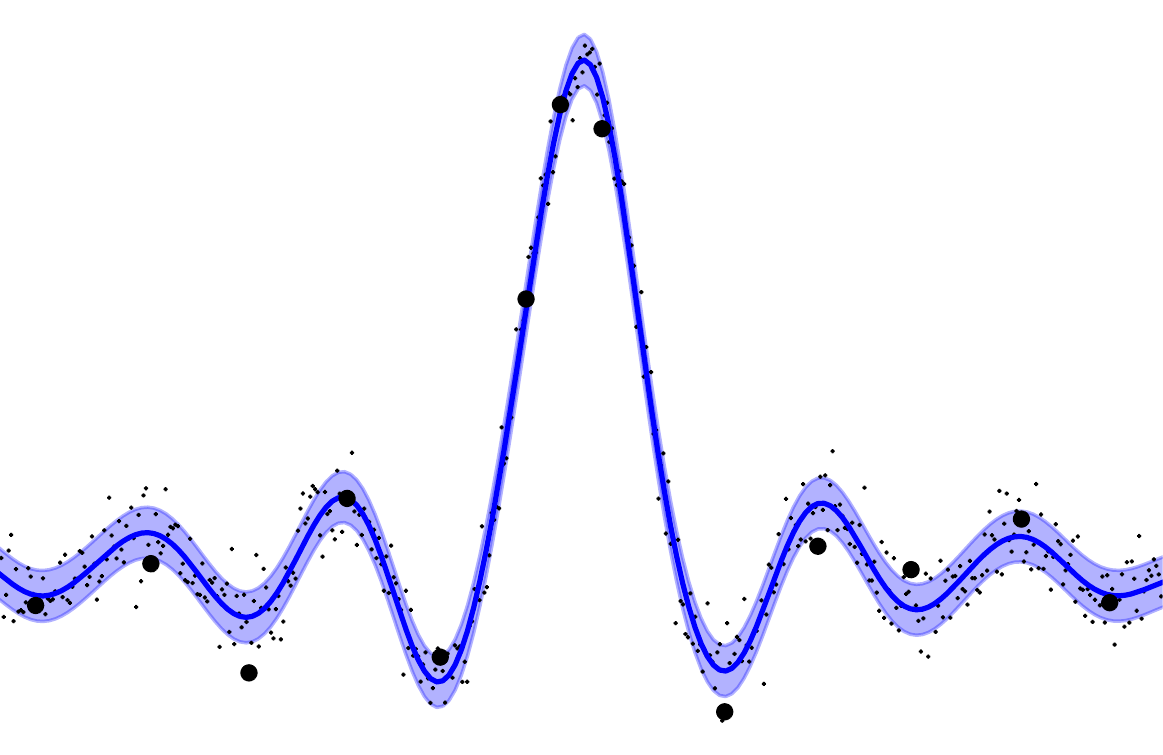}
\caption{
QSRVM fit to a noisy sinc dataset with $n=500$.
Training points are the small black dots and the relevance vectors are indicated by large black dots.
The shaded blue region is the one standard deviation from the predictive posterior mean.
}
\label{fig:qsrvm_sinc}
\end{figure}
In this section we apply the proposed quadruply stochastic inference procedure to relevance vector machines~\citep{tipping_rvm}, and we refer to this model as the quadruply stochastic relevance vector machine~(QSRVM).
QSRVMs are identical to the QSGP model, however, we parameterize $\mat{S} = \text{diag}(\mat{s})$,
where
$\mat{s} \in (0, \infty)^m$ are separate prior precision hyperparameters for each basis function.
When we maximize the evidence with respect to $\mat{s}$ by empirical Bayes, a significant fraction of them will tend to infinity and the posterior distribution over the corresponding weight parameters will be concentrated at zero, thus achieving model sparsity.
For the QSRVM model we consider $m=n$ basis functions that are kernel evaluations at all $n$ training points such that $\mPhi = \K$, the exact kernel covariance matrix between training points.
After training, the vectors $\mat{x}^{(i)}$ with $s_i$ finite are referred to as ``relevance vectors''.
In practice, we consider $s_i < 10^{4}$ to be finite.

\Cref{fig:qsrvm_sinc} demonstrates the QSRVM model trained on a noisy sinc dataset with $n=m=500$ training points~(and basis functions).
For this model, we also use a chevron Cholesky variational covariance parameterization where the first ten columns of $\mat{C}$ have a dense lower triangle.
We also use the control variate outlined in \cref{thm:cv_updates} for both the variational mean term and the terms corresponding to all ten dense columns of $\mat{C}$ with control variate rank~$\widebar{n}=250$.
For inference, we use mini-batch sizes of $\widetilde{m}=250$, and $\widetilde{n}=100$.

\Cref{fig:qsrvm_sinc} plots both the $m=n=500$ training points~(and basis functions centres), along with the locations of the discovered relevance vectors that are present the final model.
In the plot, it can be seen that just 13 relevance vectors remained in the model after training.
Evidently this model is extremely sparse.

The quadruply stochastic approach to training relevance vector machines introduces an interesting novel strategy that can allow for sparse models to be learned on huge datasets due to the fact that the per-iteration complexity doesn't depend on the number of training points or the number of basis functions in the original dictionary.
This can be a limitation for alternative RVM training techniques.
\end{document}